
\documentclass[nohyperref]{article}

\usepackage{microtype}
\usepackage{graphicx}
\usepackage{subfigure}
\usepackage{booktabs} 
\usepackage{enumitem}
\usepackage{listings} 

\usepackage{hyperref}



\usepackage[preprint]{icml2022}

\usepackage{amsmath}
\usepackage{amssymb}
\usepackage{mathtools}
\usepackage{amsthm}
\usepackage{svg}
\usepackage[capitalize,noabbrev]{cleveref}
\pagenumbering{arabic}
\theoremstyle{plain}
\newtheorem{theorem}{Theorem}[section]

\newtheorem{lemma}[theorem]{Lemma}

\theoremstyle{definition}

\theoremstyle{remark}

\usepackage[textsize=tiny]{todonotes}

\usepackage{pifont}

\usepackage{adjustbox}

\icmltitlerunning{Investigating Power laws in Deep Representation Learning}

\begin{document}

\twocolumn[
\icmltitle{Investigating Power laws in Deep Representation Learning}

\icmlsetsymbol{equal}{*}

\begin{icmlauthorlist}
\icmlauthor{Arna Ghosh}{equal,mcgill,mila}
\icmlauthor{Arnab Kumar Mondal}{equal,mcgill,mila}
\icmlauthor{Kumar Krishna Agrawal}{equal,ucb}
\icmlauthor{Blake Richards}{mcgill,mila,mni}
\end{icmlauthorlist}

\icmlaffiliation{mcgill}{School of Computer Science, McGill University, Montr\'eal, Canada}
\icmlaffiliation{ucb}{Department of Electrical Engineering and Computer Sciences, University of California, Berkeley, USA}
\icmlaffiliation{mila}{Mila- Quebec Artificial Intelligence Institute, Montr\'eal, Canada}
\icmlaffiliation{mni}{Montreal Neurological Institute, Montr\'eal, Canada}

\icmlcorrespondingauthor{Arna Ghosh}{arna.ghosh@mail.mcgill.ca}
\icmlcorrespondingauthor{Kumar Krishna Agrawal}{kagrawal@berkeley.edu}

\icmlkeywords{Machine Learning, ICML}

\vskip 0.3in
]



\printAffiliationsAndNotice{\icmlEqualContribution} 

\begin{abstract}

Representation learning that leverages large-scale labelled datasets, is central to recent progress in machine learning.  Access to task relevant labels at scale is often scarce or expensive,  motivating the need to learn from unlabelled datasets with self-supervised learning (SSL). 
Such large unlabelled datasets (with data augmentations) often provide a good coverage of the underlying input distribution. However evaluating the representations learned by SSL algorithms still requires task-specific labelled samples in the training pipeline. Additionally, the generalization of task-specific encoding is often sensitive to potential distribution shift. Inspired by recent advances in theoretical machine learning and vision neuroscience, we observe that the eigenspectrum of the empirical feature covariance matrix often follows a power law. For visual representations, we estimate the coefficient of the power law, $\alpha$, across three key attributes which influence representation learning: \textit{learning objective} (supervised, SimCLR, Barlow Twins and BYOL), \textit{network architecture} (VGG, ResNet and Vision Transformer), and \textit{tasks} (object and scene recognition). We observe that under mild conditions, proximity of $\alpha$ to 1, is strongly correlated to the downstream generalization performance. Furthermore, $\alpha \approx 1$ is a strong indicator of robustness to label noise during fine-tuning. Notably, $\alpha$ is computable from the representations without knowledge of any labels, thereby offering a framework to evaluate the quality of representations in unlabelled datasets.

\end{abstract}

\vspace{-2em}
\section{Introduction}

\newcommand{\ftheta}{\mathbf{f}_{\theta}}
\newcommand{\covariance}{\frac{1}{N}\sum_{i=1}^{N} \ftheta(x_i) \ftheta(x_i)^\top}

Representation learning using deep neural networks (DNN) is central to recent progress in machine learning (ML). The most common approach used for learning effective representations from visual data is supervised learning, wherein parameters of a model are trained with labels to optimize performance of a particular task. However, supervised training requires large-scale human-annotated dataset, which are often expensive and difficult to collect, thereby restricting it's application to a narrow domain. Recent advances in self-supervised learning (SSL), i.e. learning useful representations without relying on labels, provide early evidence of task-agnostic representations learned by imposing structural constraints on the features \cite{chen2020simple,tian2020understanding, zbontar2021barlow}. As such, \textit{evaluating} representations in SSL systems entails  computing performance on downstream tasks while fixing the feature map (usually with linear readouts on intermediate layers). While SSL algorithms learn these representations without labelled data, assessing the quality is dependent on access to downstream labels. This motivates a natural question, \emph{is it possible to assess the quality of visual representations in deep neural networks without labels?}

\begin{figure}[t]
\includegraphics[width=\columnwidth]{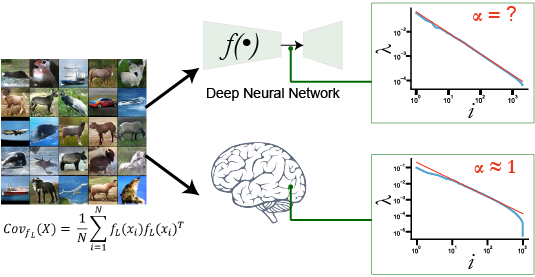}
\vspace{-2em}
\caption{Analysing large populations of neural activation in response to visual stimuli \cite{stringer2019high}, suggests an eigenspectrum with $\lambda_i \propto i^{-\alpha}$, with coefficent $\alpha \approx 1$. For intermediate representations learned by DNNs, we evaluate the sample covariance matrix $\Sigma_N(\mathbf{f})=\covariance$ and investigate the eigenspectrum of $\Sigma_N{\mathbf{(f)}}$. }
\vspace{-1em}
\end{figure}
To answer this question, we must first define desirable characteristics of ``good'' representations. Arguably, the most important attribute of a good representation is its usability for a wide range of downstream tasks. In other words, it would be ideal if we could somehow determine whether a given mapping for representations will allow us to achieve high levels of performance even when we expose the network to different data distributions. 

Notably, biological brains possess representations that are useful for a multitude of downstream tasks. What are the properties of biological neural representations that may permit this general usability? Recent findings in vision neuroscience have revealed an interesting property of learned representations in the brain \cite{stringer2019high} that may point to the answer. Specifically, representations in primary visual cortex (V1 in mice) were found to obey a power law, i.e. the variance explained by the $n^{th}$ principal component of the covariance matrix scaled as $1/n$. In recent work \cite{kong2022increasing} observe similar structure in representations from V1 in macaque monkeys and relate robustness to eigenspectrums of neural activations. Put another way, the coefficient of the power law (often denoted $\alpha$), has been observed close to 1 in biological brains. Concurrent work in DNNs has shown that enforcing $\alpha \approx 1$ in representations makes them more robust to adversarial attacks \cite{nassar20201}. Together, these results suggest that an $\alpha$ close to 1 may indicate a model that has the potential to exhibit robustness to noise and good generalization.

We explore this idea and examine the extent to which an $\alpha$ value close to 1 correlates with downstream performance on new data distributions and finetuning under loss functions. We show that across a variety of DNN architectures, learning objectives, and tasks, models that exhibit good generalization performance exhibit $\alpha$ values close to 1. Furthermore, we show that $\alpha$ values close to 1 are also correlated with good transfer learning performance. These results suggest that proximity of $\alpha$ to 1 is a potential measure of representation quality. Importantly, this measure can be calculated without any labels. As such, it provides a means to assess SSL even when labelled datasets are unavailable.

\subsection{Related Work}

\textbf{Evaluating representations and model quality}
We note a substantial body of work aiming to empirically characterize structure of emergent representations in DNN, without requiring labels \cite{nguyen2020wide,raghu2021vision}. One such index that quantifies the similarity of representations across layers (of same or different models) is Centered Kernel Alignment (CKA) \cite{kornblith2019similarity}. While CKA does not provide explicit guidance for downstream performance, \citep{martin2021predicting} show that in-distribution generalization gap can be predicted using a different index based on the model's parameters.  In particular, they show that Empirical Spectral Density (ESD) of weight matrices for many DNNs obey a power-law, with the coefficient of decay being predictive of in-distribution performance. In the present work, we explore similar indices that potentially correlate with out-of-distribution generalization by examining the eigenspectrum of \emph{activations}. 

\textbf{Generalization in Overparameterized Models}
Modern neural networks often have significantly more parameters than number of training samples, challenging classical understanding of the bias-variance tradeoff. Overparameterization permits neural networks to overfit to noise in training data, without impairing their generalization to unseen data. In recent work, Bubeck \textit{et al.} proved that for the interpolator (function implemented by the neural network) to be smooth, overparameterization is a necessary condition \cite{bubeck2021universal}. 

Furthermore, this \textit{benign overfitting} phenomenon in an overparameterized linear regression problem has been linked to the power law coefficient of the input covariance matrix \cite{bartlett2020benign}.
Specifically, Bartlett \textit{et al.} showed that for an infinite-dimensional linear regression problem, benign overfitting is possible iff the eigenspectrum satisfies a power law (upto polylog factors). 
More recently, Lee \textit{et al.} found that the tail eigenvalues of infinite-width network kernels exhibit a power law decay \citep{lee2020finite}.
Following this, Tripuraneni \textit{et al.} explored high-dimensional random feature regression settings and analytically showed a dependence between eigenspectrum decay rate of the feature covariance matrix and generalization error \citep{tripuraneni2021covariate}.  
While these characterizations provide a theoretical understanding of generalization error in the asymptotic or random feature settings, corresponding questions in the finite dimensional DNN trained with gradient descent are open problems.

For deep linear networks trained using gradient descent, the eigenvalues of input covariance determine the generalization error dynamics \cite{advani2020high}. Advani \textit{et al.} demonstrated that small eigenvalues determine the convergence of training dynamics as well as the overfitting error at convergence. For overparameterized 2-layer neural networks, Arora \textit{et al.} provided a fine grained analysis of generalization bounds \cite{arora2019fine}. 
In contrast, we study modern DNN architectures and explore the covariance structure of their learned features on visual recognition tasks.

\textbf{Main Contributions} Our core contributions include:
\vspace{-1.0em}
\begin{enumerate}
    \itemsep0em
    \item establishing that across architectures (VGG, ResNet, ViT), intermediate layers exhibit representations where the eigenspectrum follows a power law.
    \item empirical verification that the proximity of $\alpha$ to 1 is strongly correlated with downstream performance across multiple key attributes (backbone architecture, pretraining objective \& downstream task) which influence representation learning. 
    \item demonstrating that self-supervised learning algorithms learn representations that are robust to finetuning under noisy labels. Additionally, improvement in performance across finetuning correlates strongly with the proximity of $\alpha$ to 1.
\end{enumerate}

\section{Preliminaries}

We are interested in evaluating quality of features learned by modern neural network architectures, especially in the high-dimensional space of visual representations. Formally, we consider inputs $\mathbf{x}_i \in \mathcal{X} \subset \mathbb{R}^d$, and learn mappings $\mathbf{f} : \mathbb{R}^d \to \mathbb{R}^D$ such that $\mathbf{f}(\mathbf{x})$ is a vector of $D$-dimensional features. For instance, sample images from the ImageNet dataset have dimensions $(3, 224, 224)$, i.e $d \approx 0.15M$.

We consider DNNs as our function approximators, where each architecture implicitly defines a function class $\mathcal{F} = \{ \mathbf{f}_{\theta} : \theta \in \Theta \}$ where $\Theta \subset \mathbb{R}^p$ is the feasible set of model parameters (e.g bounded $\Theta, {[-B, B]}^p$ for some $B \in \mathbb{R})$. The search for \emph{good} representations usually poses the following optimization problem:

\emph{With a dataset $\mathcal{D}_{\mathrm{pretrain}}$ from some data distribution $\mathbb{P}_{\mathrm{pretrain}}$ (potentially with labels), search for \emph{optimal} parameters $\theta^*$ such that 
\begin{align}
    \theta^* = \arg\min_{\theta \in \Theta} \mathcal{L}(\mathbf{f}_\theta, D_{\mathrm{pretrain}})
    \label{eq:pretrain}
\end{align}
\label{eq:DNN_opt}
where $\mathcal{L}(\mathbf{f}, D)$ is the learning objective.}

The above optimization problem is usually non-convex, and often use gradient based optimizers to find an approximate solution $\hat{\theta}$. With $\hat{\theta}$, we evaluate the quality of representations on a downstream task with $\mathcal{D}_{\mathcal{T}}$ of $N$ samples from $\mathbb{P}_{\mathcal{T}}$. 
\begin{align}
    R(\hat{\theta}, \mathcal{T}) = \mathcal{L}(\mathbf{f}_{\hat{\theta}}, D_{\mathcal{T}})
\end{align}
A concrete example is the pretraining on ImageNet dataset, where we use a pretrained VGG16 model to extract features from images. These representations are finetuned with a linear readout on MIT-67 dataset.

\subsection{Covariance estimation and eigenspectrum}
\label{subsec:alpha_estimation}

For parameterized functions $\mathbf{f}_{\theta} : \mathcal{X} \to \mathbb{R}^D$ , the empirical feature covariance matrix, $\Sigma_N(\mathbf{f_\theta})$ is defined as 
\begin{align*}
    \Sigma_N(\mathbf{f_\theta}) &= \frac{1}{N} \sum_{i=1}^N\mathbf{f}_\theta(x_i) \mathbf{f}_\theta(x_i)^T 
\end{align*}
The eigenspectrum of  $\Sigma_N(\mathbf{f_\theta})$ informs us about the variance explained by each principal component of the space spanned by representations $\mathbf{f_\theta(x)}$.
Using the spectral decomposition theorem on symmetric matrices, $\Sigma = U \Lambda U^T$, where $\Lambda$ is a diagonal matrix with nonnegative entries, and $U$ is a matrix whose columns are the eigenvectors of $\Sigma$. Without loss of generality we assume that $\lambda_1 \geq \lambda_2 ... \geq \lambda_m$, where $m = \min(N, D)$ is the rank of $\Sigma_N(\mathbf{f_\theta})$.

The eigenspectrum of a covariance matrix is said to follow a \emph{power law} $PL(\alpha)$, or \emph{zeta distribution} if for $\lambda_j \in [\lambda_{min}, \lambda_{max}]$, the eigenvalues $\lambda_j$ are all nonnegative, and 
\begin{align*}
    \lambda_j \propto j^{-\alpha}
\end{align*}
for some $\alpha > 0$. Here, $\alpha$ is the \emph{slope} of the power law, and is referred to as the \emph{coefficient of decay} of the eigenspectrum. Intuitively, small $\alpha$ (typically $\alpha \leq 1$) suggests a dense encoding, while a high $\alpha$ (rapid decay) corresponds to a sparse encoding.


\subsection{Network Architectures}

We examine representation learning primarily in the context of visual downstream tasks. As such, we look at neural network architectures which follow different design principles, with the shared goal of pushing the envelope on generalization and robustness.

\textbf{Classic Convolutional Neural Networks}  Deep Convolutional Neural Networks (CNNs) have been instrumental in early successes in image classification. In particular, \cite{simonyan2014very} proposes a class of VGG-X architectures, characterized by a sequence of convolutional layers, two fully connected layers and a softmax readout layer. 

\textbf{Residual Networks with Skip Connections} Learning with residual connections \cite{he2016deep} propose a breakthrough in the ability to train very deep neural networks ($\sim$1000 layers) by adding a residual connection to \emph{block of convolutions}. Recent findings of \cite{raghu2021vision} suggest that early and later residual blocks learn qualitatively different representations of the same input. In particular, the early layers are shown to learn more localized information about the input, while the downstream layers with a larger receptive field are able to capture more global information.

\textbf{Vision Transformers} Recent advances with self-attention modules in transformers \cite{dosovitskiy2020image} propose a different paradigm for representation learning. With patch-based image embedding and self-attention, even the early layers have a global receptive field, allowing the model to store representations at different scales via multiple heads. While the later layers of the network learn specialization to have globally relevant features, \cite{raghu2021vision} suggests that the representations learned by the early layers are indeed across different scales.

\subsection{Learning objectives}

As outlined in \cref{eq:pretrain}, models in the same function class are often optimized under different learning objectives, to incorporate inductive biases. We broadly differentiate between supervised (with labels) and self-supervised (without labels) pretraining objectives.

\textbf{Supervised Learning} Since we consider primarily classification tasks, models trained with this objective learn to predict the labels corresponding to inputs in $\mathcal{X}$. To recover the maximum likelihood estimator, the cross entropy loss is often used to measure the error of the model, as $\mathcal{L}_{\mathrm{sup}} = -\frac{1}{N} \sum_{i=1}^N y_i \log p_\theta (x_i)$ where $p_\theta(x_i) = \mathrm{softmax}(\mathbf{g}_{\phi}(\mathbf{f_\theta}(x_i)))$ denotes the logits and $\mathbf{g}_{\phi} : \mathbb{R}^D \to \mathbb{R}^{k}$ denote the readout layer. Here, $k$ is the number of classes.

\textbf{Self-Supervised Learning} A common theme in several recently proposed SSL algorithms is to impose invariance constraints on the representations, and rely on different views of the data to learn these embeddings. In our experiments, we evaluate features pretrained under two classes:

\begin{enumerate}
\item \textbf{Dual-Networks} Motivated by the invariance of representations under benign augmentations, the \emph{dual-networks} learning objective can be summarised as 
\begin{equation}
    \mathcal{L}_{\mathrm{dual}} = \frac{1}{2} \| \mathbf{f}(\mathbf{x}^{A}; \theta) - \mathbf{f}(\mathbf{x}^{B}; \theta_0) \|^2 
\end{equation}
In particular, we consider SimCLR, BYOL \cite{chen2020simple, grill2020bootstrap} as pretraining objectives, and evaluate the representations on downstream task performance. 
\item \textbf{Efficient Encoding}
Inspired by the efficient coding hypothesis \cite{barlow1961possible}, the \emph{Barlow Twins} learning objective \cite{zbontar2021barlow} proposes imposing a \emph{soft-whitening} constraint as the learning objective.
\begin{align}
    \mathcal{L}_{\mathrm{BT}} &= \underset{\text{invariance}}{\underbrace{\sum_i (1-\mathcal{C}(\mathbf{f_\theta})_{ii})^2}} + \lambda  \underset{\text{redundancy-reduction}}{\underbrace{\sum_i \sum_{j \neq i} \mathcal{C}(\mathbf{f_\theta})_{ij}^2}} \\
    \text{where } & \mathcal{C}(\mathbf{f_\theta})_{ij} = \frac{\sum_{N} \mathbf{f}(x^A; \theta)_i \mathbf{f}(x^B; \theta)_j }{ \sqrt{\sum_{N} \mathbf{f}(x^A; \theta)_i^2 \sum_{N} \mathbf{f}(x^B; \theta)_j^2} } \nonumber
\end{align}

Notably with sufficient large $\lambda$, the model would impose an $\alpha=0$ constraint on the representations.
\end{enumerate}
\section{Power-Laws in Deep Representation Learning}

\begin{algorithm}[tb]
    \caption{Computing Eigenspectrum Decay}
    \label{alg:spectral_decay}
    
     \definecolor{codeblue}{rgb}{0.25,0.5,0.5}
     \lstset{
       basicstyle=\fontsize{8pt}{8pt}\ttfamily\bfseries,
       commentstyle=\fontsize{8pt}{8pt}\color{codeblue},
       keywordstyle=\fontsize{8pt}{8pt},
     }
 \begin{lstlisting}[language=python, mathescape=true]
 # model: encoder network
 # layer: layer index
 # B: batch size
 # N: size of evaluation split
 # D: dimensionality of the representations

 cov = torch.zeros(D, D)
 
 for batch in eval_dataloader: 
     # extract features 
     feats = model.feats(batch,layer=layer) # BxD

     # aggregate covariates per batch
     cov += torch.mm(feats.T, feats) / N # DxD

 # compute eigenspectrum 
 eigenspectrum = torch.linalg.eigvals(cov)

 # find decay coefficient (where $\lambda_i \propto i^{-\alpha}$)
 alpha = fit_powerlaw(eigenspectrum)
 \end{lstlisting}
 \end{algorithm}

Incorporating structure in deep representation learning (e.g inductive biases via model architectures) has been extremely effective in reducing generalization error of learned models across domains. To explain these empirical findings, recent work in theoretical machine learning attempts to analyse bounds on generalization error in restricted settings. While we primarily focus on finite-width neural networks on real-world datasets, we begin by highlighting  generalization bounds in the infinite feature dimensional regime \cite{bartlett2020benign} for linear regression with Gaussian features. We extend such properties of the min-norm interpolant to finite-width linear regression with power-law in covariates, optimized using gradient descent to motivate our experiments in neural networks. 
For the rest of the section, we denote feature maps as $\mathbf{f}_\theta(x) \in \mathbb{R}^D$ where $\mathbf{f}_{\theta} : \mathcal{X} \to \mathbb{R}^D$, and the readout network as $\mathbf{g}_{\phi} : \mathbb{R}^D \to \mathbb{R}^{k}$, where $k$ is target dimensionality. For simplicity of analysis, we consider linear readouts unless explicitly mentioned, i.e. $\mathbf{g}_{\phi}(x) = x^T\phi$.

\subsection{$D\to\infty$ regime}
The eigenspectrum of the feature covariance matrix sheds light on the relation between the smoothness of mapping $\mathbf{f}_\theta$ and good generalization in the asymptotic regime when $D \to \infty$. In this setting, Stringer \textit{et al.} show that the kernel function associated with $\mathbf{f}_{\theta}$ is continuous iff $\alpha > 1$ (see Thm 3. in Supplementary of \cite{stringer2019high} for proof).

\begin{lemma}
\cite{stringer2019high} Let $\lambda_1 \geq \lambda_2 \geq ... \geq 0$ be the eigenspectrum of $\Sigma_{\mathbf{f}}$ and $K(x,x')$ be the kernel function corresponding to the mapping $\mathbf{f}_{\theta} : \mathcal{X} \to \mathbb{R}^D$. If $\frac{1}{D}\mathbb{E}_{x}[\mathbf{f}_{\theta}(x)^T\mathbf{f}_{\theta}(x)]$ is finite, ie. finite variance in the representation space, and $K$ is continuous, then $\lambda_n = o(n^{-1})$.
\label{theorem_Stringer_continuity}
\end{lemma}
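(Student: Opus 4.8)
The plan is to recognize the statement as a consequence of Mercer's theorem combined with an elementary fact about monotone summable sequences. Write $\mu$ for the data distribution on $\mathcal{X}$ and, consistent with the $D\to\infty$ setup of this section, take $K(x,x') = \tfrac{1}{D}\mathbf{f}_\theta(x)^T\mathbf{f}_\theta(x')$ (in the limiting sense). First I would introduce the associated integral operator $T_K : L^2(\mathcal{X},\mu)\to L^2(\mathcal{X},\mu)$, $(T_Kg)(x) = \int_{\mathcal{X}} K(x,x')\, g(x')\, d\mu(x')$, and record that $T_K$ shares its nonzero spectrum with the feature covariance operator $\Sigma_{\mathbf{f}}$ (the Gram/covariance duality), so the eigenvalues of $T_K$ are exactly $\lambda_1 \geq \lambda_2 \geq \cdots \geq 0$.

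Second I would invoke the hypotheses. Continuity of $K$ guarantees that $T_K$ is a compact, self-adjoint, positive operator with a discrete spectrum, so the ordering $\lambda_1 \geq \lambda_2 \geq \cdots$ is meaningful, and it admits a Mercer-type eigen-expansion $K(x,x') = \sum_{n\ge 1}\lambda_n\phi_n(x)\phi_n(x')$ with $\{\phi_n\}$ an orthonormal eigenbasis. Setting $x'=x$ and integrating against $\mu$ yields the trace identity $\sum_{n\ge1}\lambda_n = \int_{\mathcal{X}}K(x,x)\,d\mu(x) = \tfrac{1}{D}\mathbb{E}_x[\mathbf{f}_\theta(x)^T\mathbf{f}_\theta(x)]$, which is finite by the finite-variance hypothesis. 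Hence $(\lambda_n)_{n\ge1}\in\ell^1$.

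Third I would finish with the elementary lemma: if $a_1\ge a_2\ge\cdots\ge 0$ and $\sum_n a_n<\infty$ then $na_n\to 0$. Indeed, for each $n$, $\sum_{k=\lfloor n/2\rfloor+1}^{n}a_k \ge \lceil n/2\rceil\,a_n \ge \tfrac{n}{2}a_n$, and the left-hand side is a tail of a convergent series, hence tends to $0$; so $na_n\to 0$. Applying this to $a_n=\lambda_n$ gives $\lambda_n = o(n^{-1})$, as claimed.

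The main obstacle is not any single computation but making the operator-theoretic setup rigorous in the $D\to\infty$ regime: one must check that the limiting kernel $K$ is well defined and genuinely continuous, identify the correct domain (e.g. compactness of $\mathrm{supp}\,\mu$, or the regularity needed for Mercer to apply on a noncompact domain), and confirm that the spectrum appearing in the statement is precisely that of the limiting covariance operator. It is worth noting the division of labor between the hypotheses: continuity of $K$ is what makes the eigenspectrum discrete and the trace formula valid, while the finite-variance assumption is what forces the operator to be trace-class — i.e. $\sum_n\lambda_n<\infty$ — which is the only place summability enters. An alternative route avoids Mercer's expansion by arguing directly that a positive self-adjoint operator with finite trace has summable eigenvalues, but the Mercer formulation makes the trace identity most transparent.
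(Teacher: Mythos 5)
Your proposal is correct and follows essentially the same route as the paper's own (sketched) argument, which it defers to Stringer et al.: equate the trace of the continuous kernel with the total variance in representation space, so $\sum_n \lambda_n = \tfrac{1}{D}\mathbb{E}_x[\mathbf{f}_\theta(x)^T\mathbf{f}_\theta(x)] < \infty$, and then deduce $n\lambda_n \to 0$ from summability of a nonincreasing nonnegative sequence. Your Mercer-theorem framing and the explicit tail estimate simply make those two steps precise.
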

\vspace{-1em}

The proof for \cref{theorem_Stringer_continuity} entails showing that trace of the continuous kernel $K$ is the total variance in representation space, which in turn equals trace of $\Sigma({\mathbf{f}})$, i.e. sum of all eigenvalues, $\lambda_i$'s. Following this equality, it can be shown that for finite variance, $\lambda_n \leq \epsilon\frac{1}{n}$ for all $n \geq N$ and some $\epsilon > 0$. Interestingly, \cref{theorem_Stringer_continuity} shows a direct relation between $\alpha$ and smoothness of the representation kernel and offering an insight into why having slow eigenspectrum decay (or low decay coefficients) could be pathological. 

In more recent work, \cite{bartlett2020benign} studied the linear regression setting with Gaussian features, and proved that the min-norm solution provides good generalization performance iff the eigenspectrum of $\Sigma(\mathbf{f_\theta})$ follows a power law (upto polylogarithm factors) with $\alpha=1$. More formally,

\begin{lemma}
\cite{bartlett2020benign}
In a linear regression problem parameterized as $\hat{y} = \mathbf{f}_{\theta}(x)^T\psi$, let the risk of a min-norm solution be defined as
\begin{equation}
    R[\psi] := \mathbb{E}_{x,y}\left[(y-\mathbf{f}_{\theta}(x)^T\psi)^2 - (y-\mathbf{f}_{\theta}(x)^T\psi^{*})^2 \right]
\end{equation}
where $\psi^{*}$ denotes the optimal parameter vector. If the $n^{th}$ eigenvalue of $\Sigma({\mathbf{f}})$ follows a power law, i.e. $\lambda_n = n^{-\alpha}\ln (n+1)^{-\beta}$, then $R[\psi]$ is small iff $\alpha=1$ and $\beta > 1$ 
\label{theorem_Bartlett_benign}
\end{lemma}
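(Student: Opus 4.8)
The plan is to derive the statement from the sharp non-asymptotic risk characterization for the minimum-norm interpolator of Bartlett \textit{et al.}, and then reduce everything to a spectral computation for the power-law profile. Recall that the minimum-norm interpolant $\psi$ of $n$ Gaussian samples admits a bias--variance decomposition $R[\psi] = B + V$, and that Bartlett \textit{et al.} prove that, with high probability over the random design, both $B$ and $V$ agree up to universal constants with explicit functions of the \emph{effective ranks}
\begin{align*}
  r_k(\Sigma) = \frac{\sum_{i>k}\lambda_i}{\lambda_{k+1}}, \qquad
  R_k(\Sigma) = \frac{\big(\sum_{i>k}\lambda_i\big)^2}{\sum_{i>k}\lambda_i^2},
\end{align*}
and of the critical index $k^* = \min\{k\ge 0 : r_k(\Sigma)\ge bn\}$ for a universal constant $b$. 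Concretely, $V \asymp \sigma^2\big(\tfrac{k^*}{n} + \tfrac{n}{R_{k^*}(\Sigma)}\big)$, while $B$ is controlled by $\|\psi^*\|^2$, $k^*/n$ and $\lambda_{k^*+1}$, vanishing precisely when $k^*/n\to 0$ and $\lambda_{k^*+1}\to 0$. Hence $R[\psi]\to 0$ as $n\to\infty$ if and only if (a) $k^*/n\to 0$, (b) $n/R_{k^*}(\Sigma)\to 0$, and (c) the bias vanishes, which for a fixed spectrum is equivalent to $k^*\to\infty$, i.e. to $\mathrm{tr}\,\Sigma = \sum_i\lambda_i < \infty$ (otherwise $r_0=\infty$, $k^*\equiv 0$, and $\lambda_{k^*+1}=\lambda_1\not\to 0$).

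With this reduction in hand, the remaining work is to evaluate the effective ranks for $\lambda_n = n^{-\alpha}\ln(n+1)^{-\beta}$. Comparing the tail sums with $\int_k^\infty x^{-\alpha}(\ln x)^{-\beta}\,dx$ and $\int_k^\infty x^{-2\alpha}(\ln x)^{-2\beta}\,dx$, and using that $(\ln x)^{-\beta}$ is slowly varying so it factors out at leading order, one obtains: $\mathrm{tr}\,\Sigma<\infty$ iff $\alpha>1$, or $\alpha=1$ and $\beta>1$; and, in those two regimes,
\begin{align*}
  \alpha > 1:\quad & r_k(\Sigma) = \Theta\!\big(\tfrac{k}{\alpha-1}\big), \qquad R_k(\Sigma) = \Theta(k),\\
  \alpha = 1,\ \beta > 1:\quad & r_k(\Sigma) = \Theta\!\big(\tfrac{k\ln k}{\beta-1}\big), \qquad R_k(\Sigma) = \Theta\!\big(k(\ln k)^2\big).
\end{align*}

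The equivalence now follows by cases. If $\alpha<1$, or $\alpha=1$ with $\beta\le 1$, then $\mathrm{tr}\,\Sigma=\infty$, condition (c) fails, and $R[\psi]$ stays bounded away from $0$. If $\alpha>1$, then $r_{k^*}\asymp bn$ gives $k^*=\Theta(n)$, so $k^*/n$ tends to a positive constant and $V\gtrsim\sigma^2 k^*/n$ does not vanish: condition (a) fails. If $\alpha=1$ and $\beta>1$, then $r_{k^*}\asymp bn$ reads $k^*\ln k^*\asymp b(\beta-1)n$, which inverts to $k^*=\Theta(n/\ln n)$; consequently $k^*/n\to 0$, and $R_{k^*}(\Sigma)=\Theta(k^*(\ln k^*)^2)=\Theta(n\ln n)$ so $n/R_{k^*}(\Sigma)=\Theta(1/\ln n)\to 0$, and $\lambda_{k^*+1}=\Theta\!\big((\ln n)^{1-\beta}/n\big)\to 0$. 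All three conditions hold, so $R[\psi]\to 0$. This establishes that $R[\psi]$ is small if and only if $\alpha=1$ and $\beta>1$, as claimed.

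The \textbf{main obstacle} is the analytic input rather than the bookkeeping: proving the two-sided, high-probability bounds on $B$ and $V$ in terms of $(k^*, r_{k^*}, R_{k^*}, n)$, which hinge on Gaussianity of the features and a delicate decomposition of the interpolant into its top-$k^*$ eigendirections and the tail. We take these bounds as given from Bartlett \textit{et al.} and contribute only the spectral evaluation above. A secondary technical point is the boundary case $\alpha=1$: the slowly varying factor $(\ln n)^{-\beta}$ must be handled with Karamata-type estimates rather than crude integral comparisons, and the implicit relation $k^*\ln k^*\asymp n$ must be inverted with enough control on the $\ln\ln n$ corrections to certify the stated rates for $k^*/n$ and $n/R_{k^*}(\Sigma)$.
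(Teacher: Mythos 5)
The paper itself gives no proof of this lemma---it is imported from \cite{bartlett2020benign}, and the reader is explicitly referred there---and your proposal reconstructs precisely the argument of that source: take their two-sided risk bounds in terms of the effective ranks $r_k$, $R_k$ and the critical index $k^*$ as given, then evaluate these quantities for $\lambda_n = n^{-\alpha}\ln(n+1)^{-\beta}$; your asymptotics ($r_k,R_k=\Theta(k)$ for $\alpha>1$; $r_k=\Theta(k\ln k)$, $R_k=\Theta(k(\ln k)^2)$ for $\alpha=1,\beta>1$) and the resulting case analysis ($k^*=\Theta(n)$ when $\alpha>1$, versus $k^*=\Theta(n/\ln n)$ and $n/R_{k^*}=\Theta(1/\ln n)$ when $\alpha=1,\beta>1$) are correct and match their Theorem 6. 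The one loose point is the non-summable regime ($\alpha<1$, or $\alpha=1$ with $\beta\le 1$): the cited lower bounds control the variance term, not a bias term via ``$\lambda_{k^*+1}\not\to 0$,'' and the cleaner disposal of these cases is that a Gaussian covariate with $\mathrm{tr}\,\Sigma=\infty$ lies outside the Hilbert-space model altogether, so they are excluded rather than proved to have non-vanishing risk.
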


We refer the reader to \cite{bartlett2020benign} for the proof of  \cref{theorem_Bartlett_benign}.
Taken together, \cref{theorem_Stringer_continuity} and \cref{theorem_Bartlett_benign} offers a normative explanation for $\alpha$ being close to 1, i.e. for representations to be smooth as well as offering a basis for achieving competitive performance on downstream tasks. 

\begin{figure}
    \centering
    \includegraphics[width=0.41\textwidth]{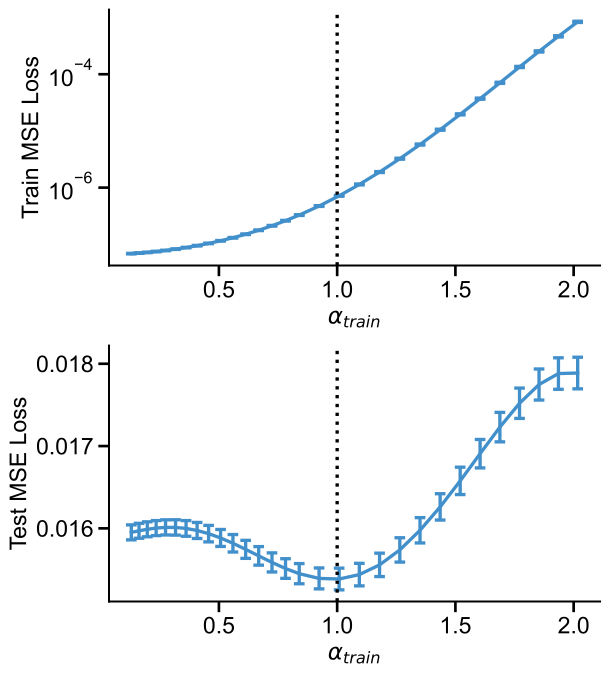}
    \caption{Overparameterized linear regression, with inputs drawn from Gaussian distribution with power-law in the covariance matrix. Models are trained with gradient descent with initialization $\psi_0 = \mathbf{0}$. Note that $\alpha > 1$ particularly suffers from high train, test MSE loss, with low generalization error with $\alpha \approx 1$.}
    \label{fig:regression}
    \vspace{-1em}
\end{figure}
\subsection{Finite dimensional models}
The asymptotic regime of infinite width is an excellent framework to study theoretical properties of DNN representations. Despite having a large number of parameters, practical DNNs always possess finite dimensional representations, making it important to investigate the implications of such results in finite width models. As such, \cref{theorem_Stringer_continuity} is hinged on the fact that the total variance in the representations is bounded and equals the sum of eigenvalues, under continuous kernels. In finite dimensions, the sum of the eigenvalues will be bounded as long as $\alpha \geq 0$, so $\alpha$ might not necessarily be indicative of the geometry in the representations. Similarly, \cref{theorem_Bartlett_benign} allows for a wide range of eigenvalue sequences alongside powerlaws with $\alpha > 1$(refer to Thm 6 in \cite{bartlett2020benign} for details). 

These finite dimensional analogies to \cref{theorem_Stringer_continuity} and \cref{theorem_Bartlett_benign} raise the question: \emph{Does $\alpha$  sufficiently larger than 1, still allow efficient learning and strong generalizability?}

To answer this question, we narrow our focus to gradient-based optimization techniques usually used to train DNNs. In particular, from the optimization perspective, Advani \textit{et al.} showed that for deep linear regression in high dimensions, the time required for training and the steady-state generalization error are both $\mathcal{O}(\frac{1}{\lambda_{min}})$ \cite{advani2020high}. A key difference from our work is that they assume the inputs are drawn from an isotropic Gaussian distribution. Instead, we investigate the generalization of linear regression on $\mathbf{f}_{\theta}(x)$, which has a power law structure in its covariates. First, in the linear regression setting, Advani \textit{et al.} emphasize
\begin{lemma}
Let $\hat{y} = \mathbf{f}_{\theta}(x)^T\psi$ be a finite dimensional linear regression problem where $\psi$ is learned using gradient descent in order to optimize the training error, $\mathbb{E}_{x,y}[(y - \mathbf{f}_{\theta}(x)^T\psi)^2]$, where $(x,y) \sim \mathcal{D}_{train}$, i.e. the training dataset. If we assume isotropic Gaussian noise in targets $y$, then the time required by gradient descent to minimize the training error, $T_{convergence} = \mathcal{O}(\frac{1}{\lambda_{min}})$
\label{theorem_advani_saxe_dynamics}
\end{lemma}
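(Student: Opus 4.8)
The plan is to read off the dynamics of gradient descent on what, as a function of $\psi$, is a convex quadratic, and to track how its modes decay along the eigendirections of the feature covariance. Concretely, collect the training features as the rows of $F \in \mathbb{R}^{N\times D}$ and the targets as $Y \in \mathbb{R}^N$; since $(x,y)\sim\mathcal{D}_{train}$ is the empirical distribution, the training error is $L(\psi) = \frac1N\|Y - F\psi\|^2$, whose gradient is proportional to $\hat\Sigma\psi - \hat b$, where $\hat\Sigma = \frac1N F^\top F$ is the (empirical) feature covariance of $\mathbf{f}_\theta$ and $\hat b = \frac1N F^\top Y$. Absorbing constants into the step size $\eta$, gradient descent reads $\psi_{t+1} = \psi_t - \eta(\hat\Sigma\psi_t - \hat b)$; letting $\hat\psi$ be the minimizer it converges to (the min-norm solution of the normal equations when $\psi_0 = \mathbf{0}$) and $\delta_t := \psi_t - \hat\psi$, the normal equations $\hat\Sigma\hat\psi = \hat b$ yield the exact linear recursion $\delta_t = (I - \eta\hat\Sigma)^t\delta_0$, with continuous-time analogue $\delta(t) = e^{-\eta\hat\Sigma t}\delta_0$ for gradient flow.

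Next I would diagonalize $\hat\Sigma = U\Lambda U^\top$ and pass to the eigenbasis, where the recursion decouples: $[U^\top\delta_t]_i = (1-\eta\lambda_i)^t[U^\top\delta_0]_i$. Because $\psi_0 = \mathbf{0}$ confines the iterates to $\mathrm{row}(F)$, only the $m = \mathrm{rank}\,\hat\Sigma$ nonzero eigenvalues $\lambda_1 \ge \dots \ge \lambda_m > 0$ are active, and $\lambda_{min} := \lambda_m$ is the smallest \emph{nonzero} eigenvalue (this is precisely the point of departure from the isotropic-input setting of Advani \textit{et al.}, where every direction matters). A short computation using $F^\top(Y - F\hat\psi)=0$ writes the excess training error as a sum of independently decaying modes, $L(\psi_t) - L(\hat\psi) = \delta_t^\top\hat\Sigma\delta_t = \sum_{i=1}^m \lambda_i(1-\eta\lambda_i)^{2t}[U^\top\delta_0]_i^2$ (respectively $\sum_{i=1}^m \lambda_i e^{-2\eta\lambda_i t}[U^\top\delta_0]_i^2$ for the flow). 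For a stable step size $\eta$ of order $1/\lambda_1$, every mode contracts geometrically and the bottleneck is the $\lambda_{min}$-mode, so the error is squeezed between constant multiples of $(1-\eta\lambda_{min})^{2t}$; hence driving it below a target $\epsilon$ needs $t \ge \frac{c}{\eta\lambda_{min}}\ln\!\big(\tfrac{L(\psi_0)-L(\hat\psi)}{\epsilon}\big)$, i.e. $T_{convergence} = \mathcal{O}(1/\lambda_{min})$ once $\lambda_1$, $\epsilon$ and $\|\delta_0\|$ are treated as constants; the matching lower estimate, using that the $\lambda_{min}$-component of $\delta_0$ is generically nonzero, shows this rate cannot be improved.

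Finally, the isotropic-Gaussian-noise hypothesis $y_i = \mathbf{f}_\theta(x_i)^\top\psi^{*} + \xi_i$ with $\xi_i\sim\mathcal{N}(0,\sigma^2)$ i.i.d.\ enters only to make $\hat b$, $\hat\psi$ and $\psi^{*}$ well defined and to guarantee (almost surely) that $\delta_0$ is not accidentally orthogonal to the slowest eigenvector, so that $1/\lambda_{min}$ is genuinely the limiting timescale; it does not affect the rate, influencing instead the steady-state generalization error at convergence rather than the time needed to reach it. I expect the only real work to be bookkeeping in the over-parameterized regime $D>N$: being careful that ``$\lambda_{min}$'' denotes the smallest nonzero eigenvalue, that gradient descent stays in the $N$-dimensional data subspace so the null modes are irrelevant, and that the $\mathcal{O}(\cdot)$ absorbs the dependence on the condition number $\lambda_1/\lambda_{min}$ and on $\log(1/\epsilon)$; the dynamical heart of the argument is just the closed-form solution of a linear recursion/ODE, exactly as in Advani \textit{et al.}
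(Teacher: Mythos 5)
Your proposal is correct and follows essentially the same route the paper itself uses: the paper defers this lemma to Advani \emph{et al.} but proves its extension (Theorem~\ref{theorem_us_convergence}, Appendix~A) by exactly this kind of analysis — diagonalizing the data Gram/covariance matrix, observing that each mode of the gradient-descent iterates contracts geometrically with factor $1-\eta\lambda_i$ under a stable step size $\eta \sim 1/\lambda_1$, and concluding that the smallest (nonzero) eigenvalue sets the convergence timescale $\mathcal{O}(\lambda_1/\lambda_{min})=\mathcal{O}(1/\lambda_{min})$. The only cosmetic difference is that you track the error $\psi_t-\hat\psi$ and the excess loss directly, whereas the paper derives a closed form for the iterates via a recursion (its Lemma~\ref{lemma:A1}) and tracks the update increments $\Delta w_k$; your remarks on the rank-deficient ($D>N$) case and on the role of the noise are sound and consistent with the paper's treatment.
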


\cref{theorem_advani_saxe_dynamics} effectively states that small eigenvalues in $\Sigma_{\mathbf{f}}$ impedes training using gradient descent. We extend \cref{theorem_advani_saxe_dynamics} to explicitly state training convergence time as the following theorem:

\begin{theorem}
Let $\hat{y} = \mathbf{f}_{\theta}(x)^T\psi$ be an overparameterized linear regression problem where $\psi$ is learned using gradient descent in order to optimize the training error, $\mathbb{E}_{x,y}[(y - \mathbf{f}_{\theta}(x)^T\psi)^2]$, where $(x,y) \sim \mathcal{D}_{train}$. If we assume power law distribution in eigenspectrum of representations at $\mathbf{f}_{\theta}$, i.e. $\lambda_n = \frac{c}{n^{\alpha}} \quad \forall n \geq n^{*}$, where $n^{*} \in \{1,2 ... N\}$, then the time required by gradient descent to minimize the training error, $T_{convergence} = \mathcal{O}(N^{\alpha})$
\label{theorem_us_convergence} where N is number of training samples.
\end{theorem}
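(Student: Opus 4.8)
The plan is to obtain the theorem as a direct consequence of \cref{theorem_advani_saxe_dynamics}, whose content is that the number of gradient-descent steps needed to drive the training error down to its floor scales as $\mathcal{O}(1/\lambda_{\min})$, where $\lambda_{\min}$ is the smallest eigenvalue of $\Sigma_{\mathbf{f}}$ that actually participates in the gradient-descent dynamics. The whole argument then reduces to pinning down which eigenvalue plays the role of $\lambda_{\min}$ in the overparameterized regime, and substituting the assumed power law.

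First I would exploit overparameterization. With $D \geq N$, the empirical feature matrix $F = [\mathbf{f}_\theta(x_1), \dots, \mathbf{f}_\theta(x_N)]^T \in \mathbb{R}^{N\times D}$ has rank (generically) $N$, so $\Sigma_N(\mathbf{f}_\theta) = \tfrac{1}{N} F^T F$ has exactly $N$ nonzero eigenvalues $\lambda_1 \geq \dots \geq \lambda_N > 0$ together with a $(D-N)$-dimensional kernel. Initializing at $\psi_0 = \mathbf{0}$, gradient descent on the quadratic training loss stays in the row space of $F$ (the orthogonal complement of $\ker \Sigma_N$), and in the eigenbasis of $\Sigma_N$ the dynamics decouple: the residual along eigendirection $i$ contracts by the factor $(1-\eta\lambda_i)$ per step for step size $\eta < 2/\lambda_1$, equivalently by $e^{-\eta\lambda_i t}$ in the gradient-flow limit. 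Hence the slowest-decaying mode is the one associated with the smallest nonzero eigenvalue $\lambda_N$, and reaching a fixed target accuracy costs $\Theta(1/(\eta\lambda_N))$ steps; this is exactly the $\mathcal{O}(1/\lambda_{\min})$ of \cref{theorem_advani_saxe_dynamics} with $\lambda_{\min} = \lambda_N$.

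Next I would invoke the hypothesis: the power law $\lambda_n = c\,n^{-\alpha}$ holds for all $n \geq n^*$ with $n^* \in \{1,\dots,N\}$, so it applies in particular at index $n = N$, giving $\lambda_N = c\,N^{-\alpha}$. Combining the two steps, $T_{\mathrm{convergence}} = \mathcal{O}(1/\lambda_N) = \mathcal{O}(N^{\alpha}/c) = \mathcal{O}(N^{\alpha})$, as claimed.

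The main obstacle I anticipate is rigorously justifying that $\lambda_N$, rather than some larger eigenvalue, genuinely governs the convergence time — i.e. ruling out the degenerate case in which the regression target has vanishing component along the $\lambda_N$-eigendirection, so that this slow mode is never excited and the bottleneck shifts to a larger eigenvalue. Here the isotropic-Gaussian-noise assumption on the targets inherited from \cref{theorem_advani_saxe_dynamics} does the work: the noise puts nonzero mass on every nonzero eigendirection with probability one, so all $N$ modes are active and the worst-case mode $\lambda_N$ controls the rate. The remaining technical points — that the step size must satisfy $\eta < 2/\lambda_1$ (affecting only the hidden constant), that one works with discrete gradient descent rather than gradient flow (again only constants), and that $N \geq n^*$ is needed so that the power law reaches index $N$ (true by assumption) — are routine and can be absorbed into a remark.
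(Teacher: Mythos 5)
Your proposal is correct and follows essentially the same route as the paper's own proof: the paper derives the per-step recursion for $w_k$ explicitly (its Lemma A.1, after \cite{shah2018minimum}), diagonalizes via the SVD to get per-mode contraction factors $\bigl(1-\hat{\eta}\lambda_j/\lambda_1\bigr)^k$, identifies the smallest nonzero eigenvalue $\lambda_N = cN^{-\alpha}$ as the bottleneck, and concludes $T_{\mathrm{convergence}} = \mathcal{O}(\lambda_1/\lambda_N) = \mathcal{O}(N^{\alpha})$ — exactly your decoupled-mode argument, with your appeal to \cref{theorem_advani_saxe_dynamics} replaced by this explicit derivation. Your remark about ruling out a vanishing target component along the slowest eigendirection is a point the paper leaves implicit, and is a reasonable addition rather than a divergence.
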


The outline of the proof builds on key results relating to gradient descent dynamics from \cite{shah2018minimum}. We show that gradient descent updates, when $\psi$ is initialized to 0, yield a recursive relation for $\psi(k)$, i.e. $\psi$ after $k$ update steps. Plugging this relation in the gradient formulation, we show that the update step length along the $n^{th}$ principal direction of $\mathbf{f}_{\theta}(x)$ shrinks exponentially with a decay rate proportional to $\lambda_n$. Therefore, the time to convergence in training is controlled by the smallest eigenvalue which, by design, follows the power law. In sum, \cref{theorem_us_convergence} provides an explanation against arbitrarily large values of $\alpha$. Taken together, \cref{theorem_us_convergence} suggests that $\alpha = 1$ might be beneficial, where these representations form basis for gradient-based optimization on downstream task performance.

\textbf{An Additional Motivating Example}
Before exploring the link between $\alpha \approx 1$ and generalization performance of deep networks, we first consider its relationship to the finite dimensional regression setting as in \citep{bartlett2020benign}.  Specifically, Theorem 6 of their paper states that the conditions for ``benign overfitting'', wherein a model can perfectly fit noisy training data without any subsequent loss of performance on testing data, may be looser in finite dimensions as opposed to the necessary and sufficient condition of $\alpha=1$ in infinite dimensions. 
To empirically test this in the finite high dimensional setting, we examine linear least squares regression using different covariate structures for the input data. Formally, we consider covariates $\{x_i\}_{i=1}^N$, such that $x_i \in \mathbb{R}^d$ is sampled from a Gaussian distribution with covariance structure $\Sigma=\mathrm{diag}\{ \lambda_1, ... \lambda_d \}$ where
$\lambda_j \sim PL(\alpha)$, i.e $\lambda_j \propto c j^{-\alpha}$. 
We assume access to the corresponding labels $\{y_i\}_{i=1}^N$ generated under a teacher function $\theta^*$, such that $y_i = x_i^T \theta^* + \epsilon_i$. We find that in this scenario there is a clear relationship between the proximity of $\alpha$ to 1 and the presence of benign overfitting. As shown in \cref{fig:regression}, when $\alpha$ is close to 1, the training loss is low, but the validation loss is also low. Thus, when $\alpha$ is close to 1, the generalization properties are at their best. This example thus provides another hint that $\alpha \approx 1$ is a potential measure for how well a model will be able to generalize. 

In our experiments, we investigate the following questions on vision classification tasks, when $\mathbf{f}_\theta$ is a neural network:
\begin{enumerate}
    \itemsep0em 
    \item \textbf{Representation Quality}: How does $\alpha$ vary across backbone architectures and pretraining learning objectives? Are representations with $\alpha \approx 1$ likely to enjoy better out-of-distribution performance?
    \item \textbf{Generalization across tasks}: Is $\alpha$ informative of generalization when evaluated on different downstream tasks? 
    \item \textbf{Robustness}: On finetuning with noisy labels, does task-performance correlate to $\alpha$?
\end{enumerate}
\vspace{-1em}

\section{Experimental Setup}

Driven by this observation in linear regression settings, we investigate whether $\alpha \approx 1$ is a good indicator of generalization performance in DNNs.  We investigate this relationship between $\alpha$ and downstream task performance across different network architectures and pretraining loss functions. In this work, we restrict our focus to visual learning tasks, specifically object recognition and scene recognition tasks. 

In each experiment, we evaluate the $\alpha$ of emergent representations, $\mathbf{f}(x)$, at intermediate layers of a DNN pretrained on ImageNet \citep{5206848} by computing the covariance matrix, $\Sigma_{\mathbf{f}}$, and fitting a power law on the eigenspectrum (refer to \cref{alg:spectral_decay}). Our pretrained models are taken from PyTorch Hub \citep{NEURIPS2019_9015} and \texttt{timm} \citep{rw2019timm}. Given that we observed no significant difference between the observed $\alpha$ values in the train and test sets, we refer to this empirical estimate as the $\alpha$ for the dataset.
To estimate the capacity of intermediate representations in solving the downstream task, we train a linear readout layer, $\mathbf{g}(.)$, from representations to target logits. Intuitively, this comes down to establishing a relationship between the manifold geometry and linear separability of representations \citep{chung2018classification}. Thereafter, we observe the correlation between estimated $\alpha$ and the linear readout performance. Notably, we also tried non-linear $\mathbf{g}(.)$ and observed a similar trend in results (see Appendix).

\vspace{-0.5 em}
\subsection{Feature Backbones}

In this section, we investigate the relationship between $\alpha$ of the representation covariance matrix and object recognition performance in DNNs with different backbones and the role of depth. In order to do so, we examine varying depth configurations within network architectures across three generations of models on the STL-10 dataset \cite{5206537}. First, we observe deep Convolutional Neural Networks (CNNs) without any residual connection as our first family of models. Specifically, we choose three different configurations of VGG-Net \cite{simonyan2014very}, namely VGG-13, VGG-16 and VGG-19. We inspect representations that are input to the dropout and MaxPool layers during the forward pass of the network. Second, we consider Deep Residual Networks \cite{he2016deep} which are widely used in computer vision. We inspect the representations that are input to each of the residual blocks as well as the Adaptive average pool in ResNet-13, ResNet-50 and ResNet-101 during their respective forward passes. Finally, owing to the recent success of transformers in object recognition tasks we consider Vision Transformers (ViT) \cite{dosovitskiy2020image} as the third family of models, namely ViT-Base/8 , ViT-Large/16 and ViT-Huge/14. Unlike VGG and ResNet, we only look at features corresponding to the [CLS] token in the intermediate layers because it summarizes the entire input image and is used in practice for class prediction. For all model architectures, we use the weights obtained from pretraining on ImageNet.
 
\begin{figure}[t!]
\includegraphics[width=\columnwidth]{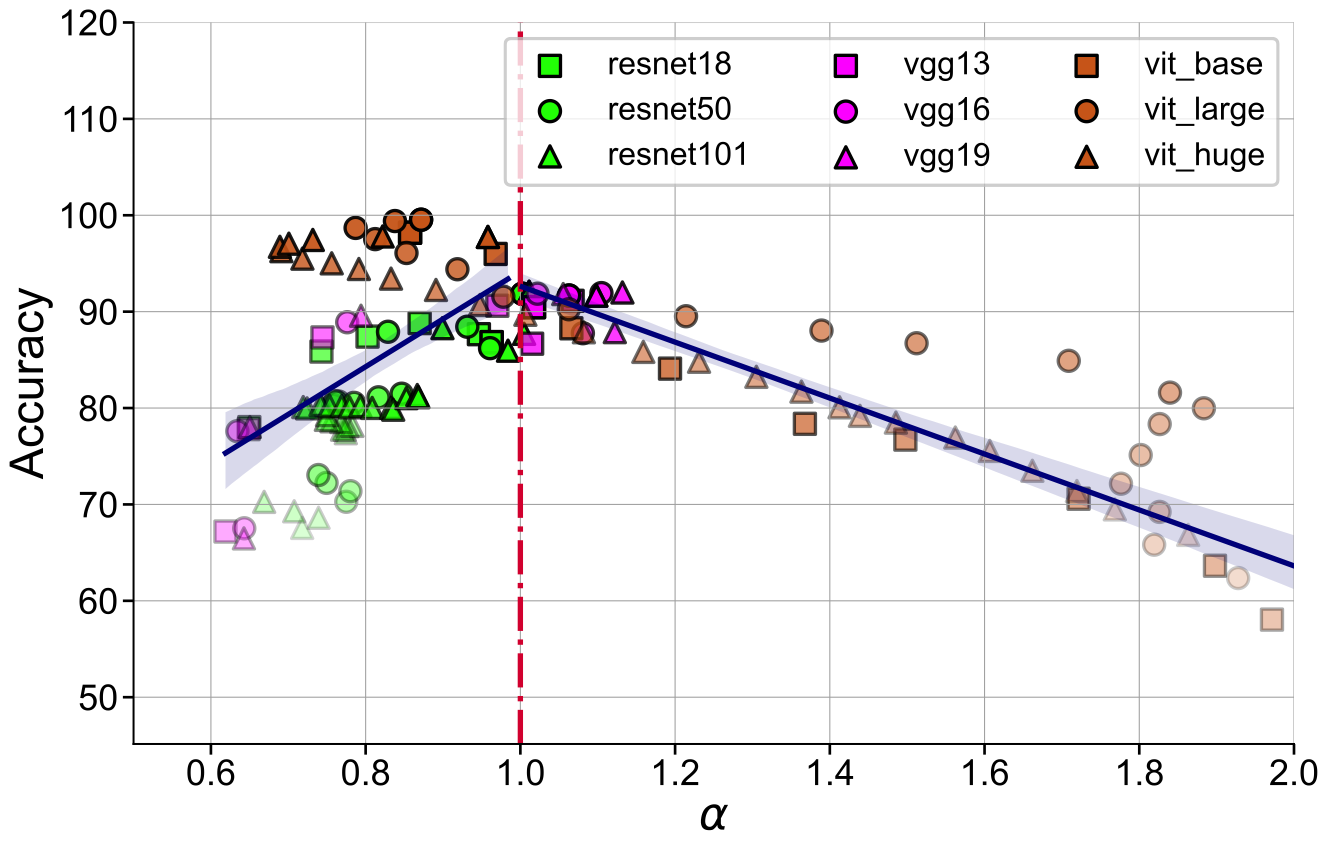}
\vspace{-2em}
\caption{Performance on STL-10 improves for representations with $\alpha$ approaching 1 across different model architectures. In each model, solid points correspond to the later layers. Correlation coefficient for the greater than 1 and less than 1 regimes are $\rho_{>1}=-0.922$, $^{*}p<0.05$ and $\rho_{<1}=0.493$, $^{*}p<0.05$ respectively.}
\vspace{-1em}
\label{fig:acc_alpha_arch}
\end{figure}

\cref{fig:acc_alpha_arch} illustrates the relation between performance and $\alpha$ for all the nine architectures across intermediate layer representations, as described above. We found that, while most intermediate representations in CNNs (with or without residual connections) exhibit $\alpha < 1$, representations in ViTs mostly exhibit $\alpha > 1$. Nevertheless, representations extracted from the deepest layers of all the models exhibit $\alpha$ value in the proximity of 1, irrespective of the total depth of each model (see Appendix \cref{fig:alpha_layers_arch}). Furthermore, the performance on downstream task increases with depth. This is unsurprising because all networks were trained to perform object recognition on ImageNet \cite{5206848} and thereby would have leveraged hierarchical processing to learn features that are tuned towards object recognition. Surprisingly enough, we observe a strong significant correlation between $\alpha$ and performance on the STL-10 dataset, i.e. a different data distribution than the training dataset, across layers and model architectures ($\rho=-0.922$, $^{*}p<0.05$ for representations exhibiting $\alpha>1$ and $\rho=-0.922$, $^{*}p<0.05$ for representations exhibiting $\alpha<1$). It is worth noting here that the correlation was weaker for the earliest layers of each model. We believe that early layers learn more task invariant features that reflect the statistics of natural images \cite{kornblith2019similarity,zeiler2014visualizing} and therefore lack task relevant information in their representations. Taken together, this observation confirms our hypothesis that $\alpha$ is a good indicator of out-of-distribution generalization performance when representations possess task relevant information.
\vspace{-0.5em}

\subsection{Learning objective}
In this section, we first aim to understand how the $\alpha$ value changes across the layers of a fixed architecture DNN when trained with different learning objectives. We take a ResNet-50 model \cite{he2016deep} pre-trained using three different SSL algorithms, namely SimCLR \cite{chen2020simple}, BYOL \cite{grill2020bootstrap} and Barlow Twins \cite{zbontar2021barlow}, and the supervised learning loss objectives on ImageNet-1k\cite{5206848} dataset. We use a similar procedure as before to extract representations from the network and estimate $\alpha$. 

\begin{figure}[t!]
\includegraphics[width=\columnwidth]{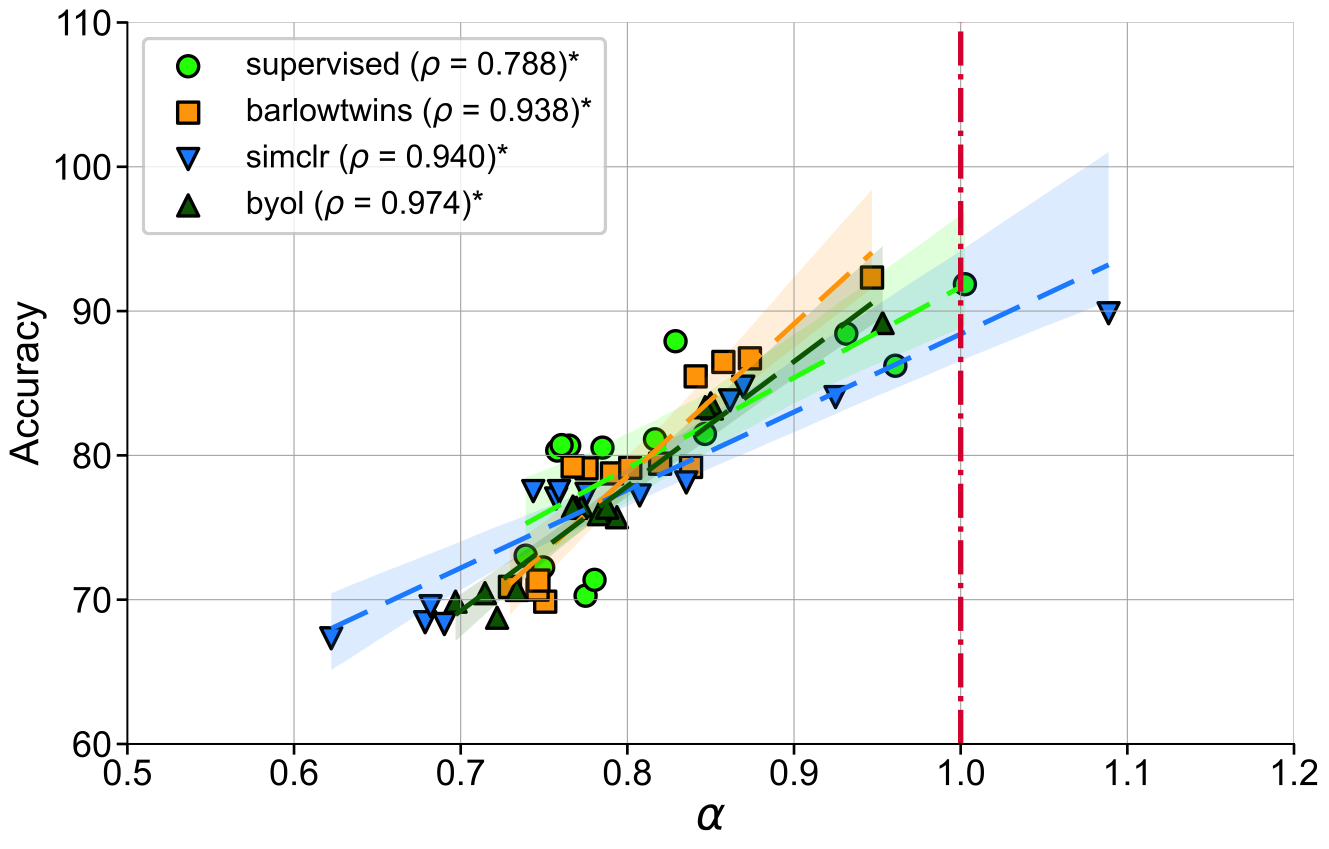}
\vspace{-2em}
\caption{The performance using linear readouts on STL-10 is strongly correlated to proximity of $\alpha$ to 1. $^{*}p<0.05$. (All models have ResNet-50 backbone.)}
\label{fig:acc_alpha_loss}
\end{figure}

Similar to results in the previous section, all networks irrespective of the pretraining loss function, exhibit $\alpha$ closer to 1 in the deeper layers in contrast to intermediate layers (see Appendix \cref{fig:alpha_layers_loss}). This surprising result indicates that although the pretraining loss function was different, representations extracted from deepest layers are reflective of the object semantics in natural images. Furthermore,  \cref{fig:acc_alpha_loss} illustrates the strong correlation between $\alpha$ and generalization performance on STL-10 across all pretraining loss functions. Together with results from the previous section, we validate our hypothesis that the representations that demonstrate good out-of-distribution generalization performance are characterized by $\alpha$ close to 1.

\begin{figure}[t!]
\includegraphics[width=\columnwidth]{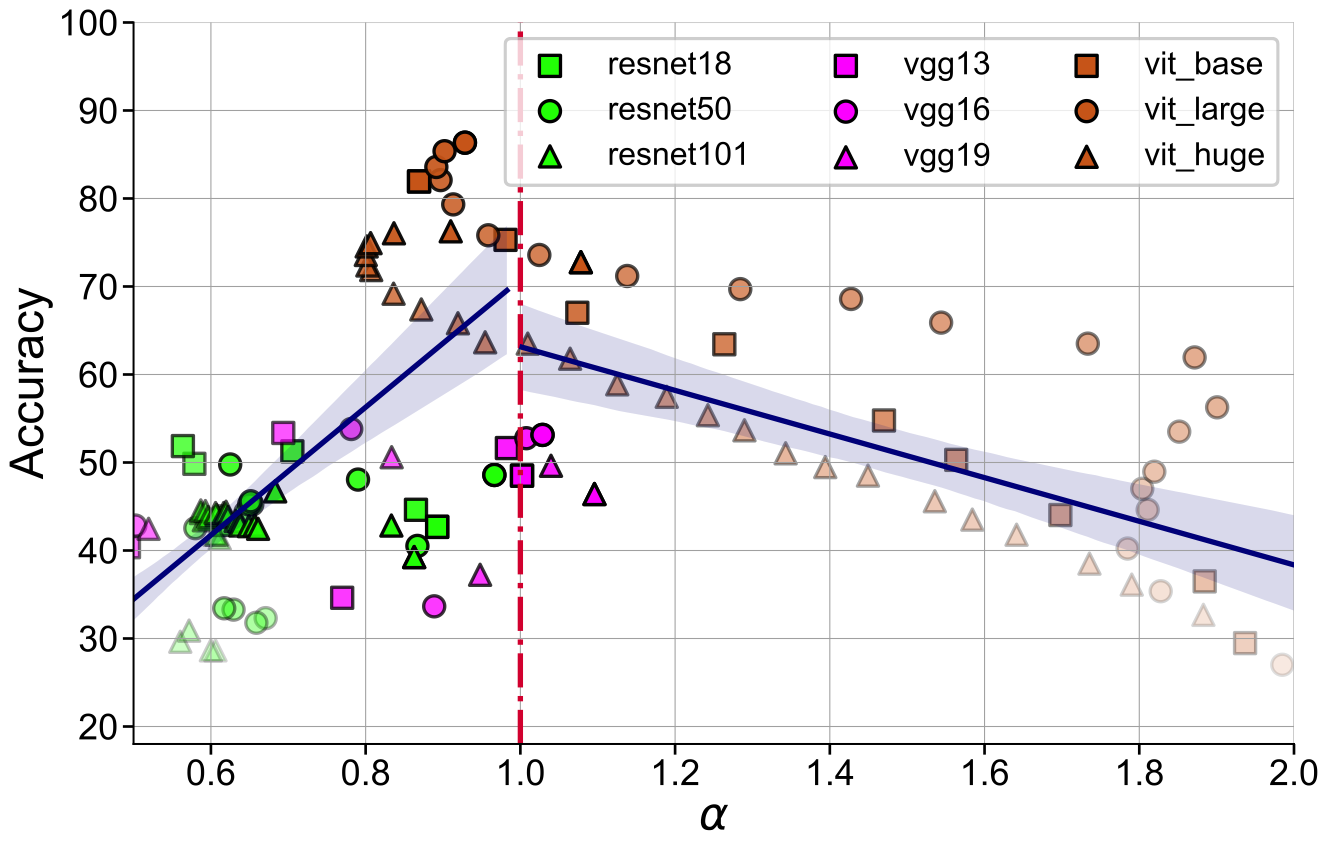}
\vspace{-2em}
\caption{Performance on MIT67 is strongly correlated to proximity of $\alpha$ to 1 across different model architectures, except ResNet models. Correlation coefficient for the greater than 1 and less than 1 regimes are $\rho_{>1}=-0.667$, $^{*}p<0.05$ and $\rho_{<1}=0.708$, $^{*}p<0.05$}
\label{plot:MIT67_loss_alpha_arch}
\vspace{-1em}
\end{figure}

\begin{figure}[t!]
\includegraphics[width=\columnwidth]{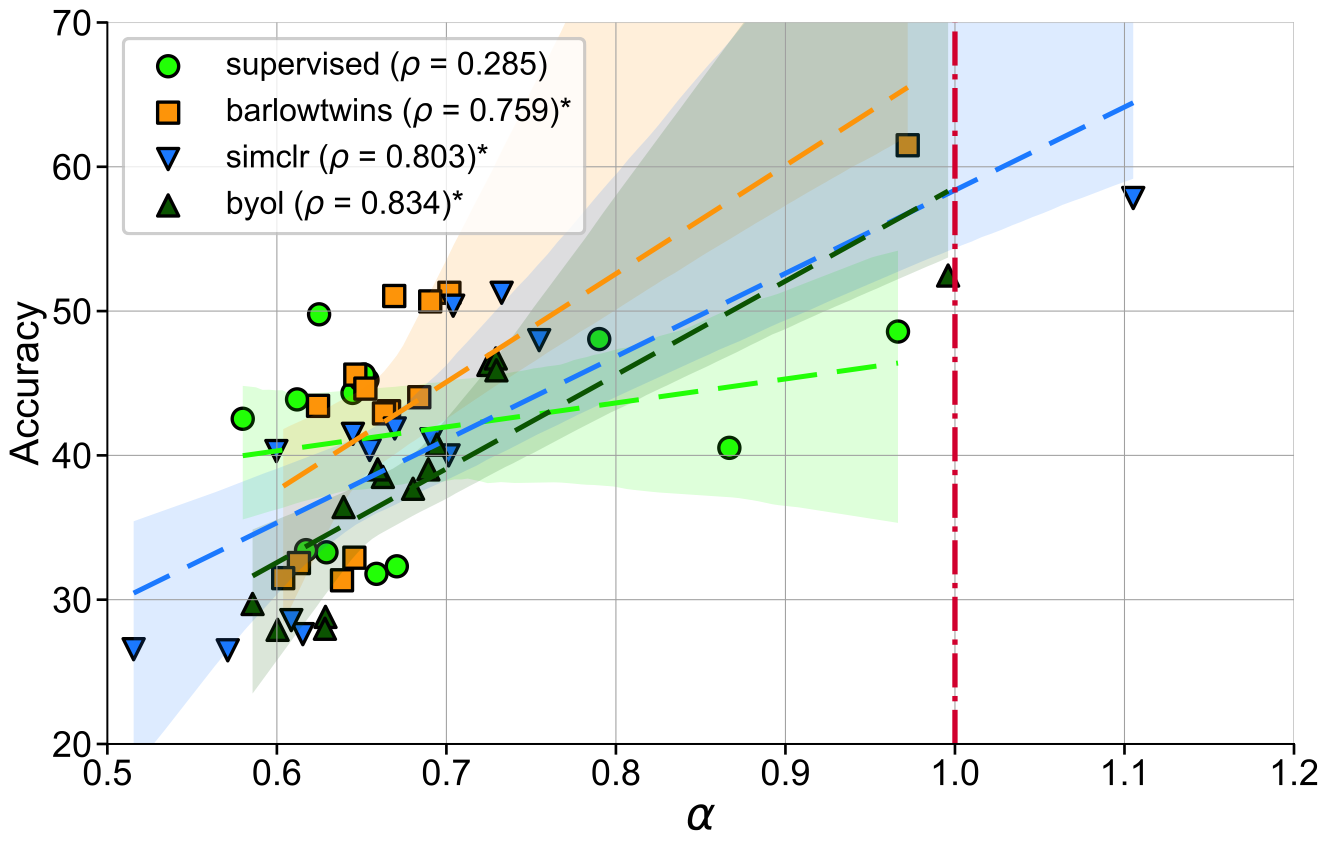}
\vspace{-1em}
\caption{For fixed backbone architecture (ResNet-50), models trained with different learning objectives (supervised, self-supervised). We evaluate intermediate features with linear probe on MIT67, observing a strong performance when $\alpha \propto 1$. $^{*}p<0.05$.}
\vspace{-1em}
\label{plot:MIT67_loss_objectives}
\end{figure}

\subsection{Scene vs Object Recognition}
So far, we have observed that $\alpha$ serves as a good indicator for out-of-distribution generalization performance, under mild considerations\footnote{When representations contain task relevant information}, when we change two key components of \cref{eq:DNN_opt}: $\mathbf{f}_\theta$ and $\mathcal{L}$. In this section, we change the evaluation task from object recognition to scene recognition. It is known that DNNs perceive objects and scenes differently based on their architecture \cite{nguyen2020wide}. We investigate whether $\alpha$ is a good measure to observe when the downstream task is different from the training task. Scene recognition is fundamentally different from the object classification task as the model needs to focus on global features in the entire image in contrast to a local features in the image which contains the object \cite{olivachapter}. We follow the same procedure as the above sections, but on the MIT67 indoor scene recognition dataset \cite{5206537}.

\cref{plot:MIT67_loss_alpha_arch} and \cref{plot:MIT67_loss_objectives} demonstrate that the relation between $\alpha$ and out-of-distribution generalization performance holds for most models on the scene recognition task. An outlier is the supervised ResNet family of models. This behavior can be attributed to the extent of local information possessed by Residual networks, owing to their fully convolutional architecture, which does not enable the resultant representations to capture information relevant to scene recognition \cite{raghu2021vision}. In other words, representations of ResNet models lack the task relevant information. Taken together, our results demonstrate that $\alpha$ can be used as a measure to estimate generalization performance across all three key elements of DNN models when the representations possess information relevant to downstream task. 

\begin{table}[t]
\caption{Classification accuracies on STL-10, on finetuning with noisy-labels (15\% label-noise) in ResNet-50.}
\label{table:robustness_STL10}
\vskip 0.15in
\begin{center}
\begin{small}
\begin{sc}
\begin{adjustbox}{width=0.47\textwidth}
\begin{tabular}{lcccr}
\toprule
& Finetune Acc. & Finetune Acc. &  \\
Objective & (noiseless) $\uparrow$ & (noise=15\%) $\uparrow$ & \% Drop $\downarrow$  \\
\midrule
BarlowTwins   & 91.81 $\pm$ 1.4 & 88.45$\pm$ 0.34 & $\sim$ 3.36 \\
SimCLR  & 88.83 $\pm$ 0.18 & 85.6$\pm$ 0.2 & $\sim$ 3.2 \\
Supervised & 88.39 $\pm$ 0.41 & 81.19 $\pm$ 1.01 & $\sim$ 8.20 \\
\bottomrule
\end{tabular}
\end{adjustbox}
\end{sc}
\end{small}
\end{center}
\vskip -0.1in
\vspace{-1.5em}
\end{table}
\subsection{Robustness to finetuning with noisy labels}

Ideally, ``good'' representations should exhibit robustness under finetuning with noisy labels. To test the robustness of visual representations learned under different objectives, and its relation to $\alpha$, we extract features from pretrained models and finetune them by training on a dataset with noisy labels. We evaluate representations that emerged when training with different learning objectives, but with a fixed backbone (ResNet-50) and the same downstream dataset (STL-10). Concretely, we extract features from layer-75 of the ResNet-50 (ResNet-50/L75) by freezing gradient propagation upto layer-75, and finetune the later layers for image classification.

In \cref{plot:finetune_STL10}, we plot the accuracy against $\alpha$ across multiple epochs of finetuning, where $\alpha$ is evaluated for the layer just before linear readouts (post-adaptive pooling). We find that as performance on the validation set improves across epochs, $\alpha$ approaches 1. When compared to finetuning with noiseless targets, \cref{table:robustness_STL10} highlights the relative robustness of self-supervised learning objectives when compared with supervised learning. This finding supports our hypothesis on variability in $\alpha$ during finetuning correlates with performance improvement.

\begin{figure}[ht!]
\includegraphics[width=\columnwidth]{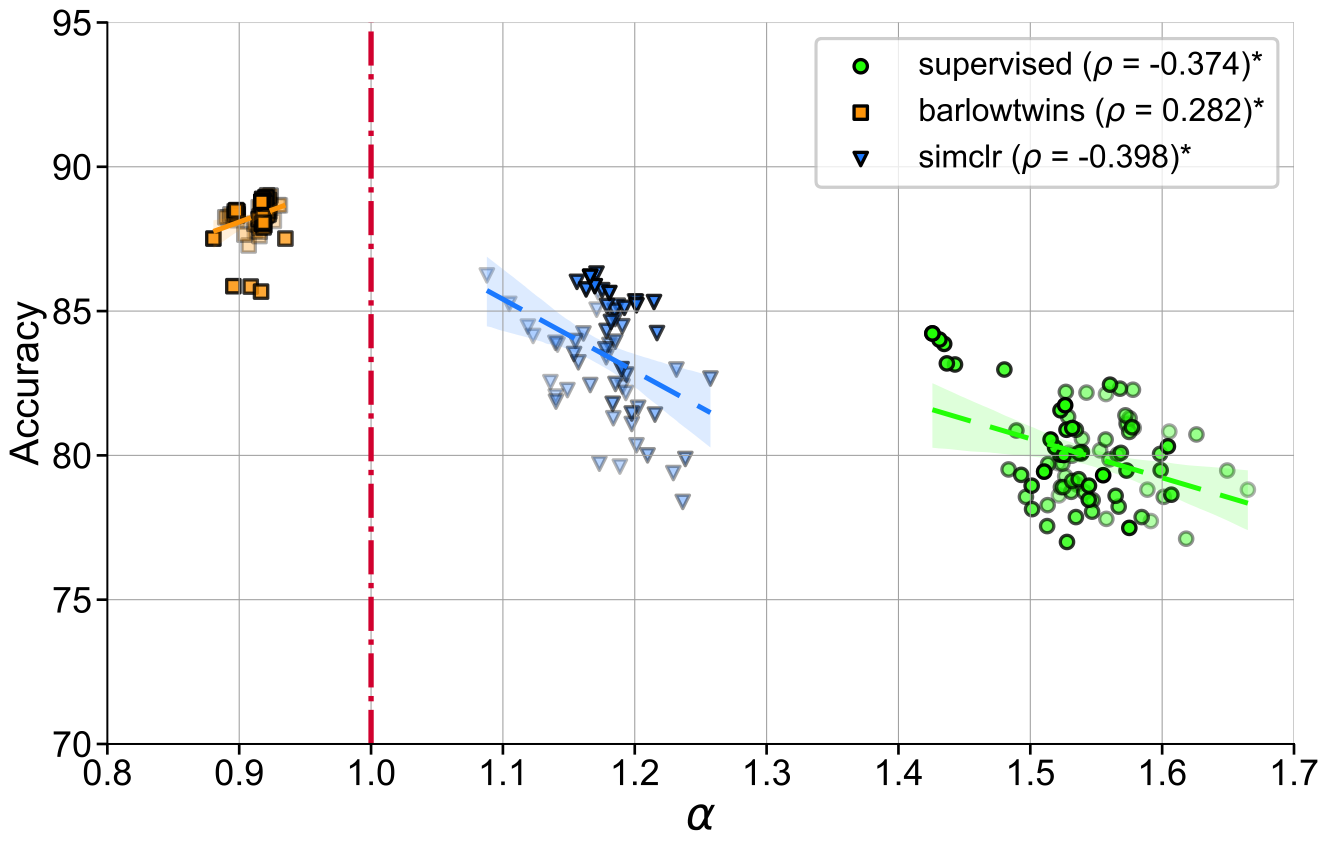}
\vspace{-2em}
\caption{Representations from ResNet-50/L75 finetuned for 30 epochs, on STL-10 dataset with \texttt{label\_noise=15\%}. Note that with increasing performance, $\alpha$ approaches 1 from either side of the spectrum (\textbf{solid points are later epochs}). Consequently, SSL objectives with representations s.t $\alpha$ closer to 1 compared to supervised training, are more robust (see \cref{table:robustness_STL10}). $^{*}p<0.05$. Check \cref{plot:finetune_STL10_appendix} for details.}
\label{plot:finetune_STL10}
\vspace{-1em}
\end{figure}

\section{Discussion}
\textbf{Summary} Our experiments suggest a strong correlation between decay coefficient for sample eigenspectrum of representations, $\alpha$, and generalization performance in tasks central to visual perception. In particular, from Figures \ref{fig:acc_alpha_arch} and \ref{fig:acc_alpha_loss} we note that across network architectures and pretraining objectives, classification accuracy on STL-10 improves as $\alpha$ approaches 1. Additionally, \cref{table:robustness_STL10} suggests that pretraining with self-supervised learning objectives provides representations which are relatively robust under noisy finetuning.

\textbf{Redundancy in ViT representations} 
Unlike all other models that were considered, representations from early layers of ViT had a rapid eigenspectrum decay with $\alpha > 1$ (see \cref{fig:acc_alpha_arch}).
The transformer architecture has a notable difference by design, i.e. early layers possess global receptive field context via self-attention on patch embeddings.  Raghu \textit{et al.} found that early ViT layers incorporate both local and global information \cite{raghu2021vision}. Based on these insights, one intuitive interpretation of our results is that the representations have a low effective rank, and encode redundant information relevant across multiple scales.

\textbf{Limitations} While the role of $\alpha$ and its relationship to generalization performance is better understood in the asymptotic setting for linear regression, similar questions in finite dimensional nonlinear models are unanswered. Moreoever, the complexity of computing eigenvalues scales $\mathcal{O}(D^3)$ where $D$ is the dimensionality of the representations. It is also worth noting that the empirical correlation was weaker for the earliest layers in each model. We believe that early layers learn more task invariant features such as corners and edges \cite{kornblith2019similarity,zeiler2014visualizing}, and therefore lack relevant information for downstream task. Therefore, $\alpha$ might not necessarily be reflective of generalization performance in poorly-trained models.

\textbf{Future Directions} Learning efficiently at scale from unlabelled datasets poses an exciting open problem in deep representation learning. We hope this work opens new perspectives on design of learning objectives and model architectures to learn task-agnostic features. Like smooth interpolation \cite{bubeck2021universal}, we hope that overparameterization and geometry in high-dimensions provide hints to a more principled understanding of generalization in deep neural networks.

\section*{Acknowledgement}
The authors would like to thank Zahraa Chorghay and Colleen Gillon for their aesthetic contribution to the manuscript and figures. This research was enabled in part by support provided by Mila (mila.quebec/en/) and Compute Canada (www.computecanada.ca).

\bibliography{paper}
\bibliographystyle{icml2022}

\newpage
\appendix
\onecolumn


\section{Proofs}
In this section, we present a formal proof of \cref{theorem_us_convergence}. In order to do so, we will use a lemma pertaining to iterative expression of the linear regression parameters over training epochs. This lemma is inspired by the results presented in \cite{shah2018minimum}

\begin{lemma}
Let $\hat{y} = x^T w$ be a finite dimensional linear regression problem where $w$ is learned using gradient descent in order to optimize the training error,
\begin{equation}
    \mathcal{L} = \mathbb{E}_{x,y}[(y - \hat{y})^2] = \mathbb{E}_{x,y}[(y - x^T w)^2] 
\end{equation}
where $(x,y) \sim \mathcal{D}_{train}$, i.e. the training dataset. Then $w_k$, i.e. $w$ after training for $k$ epochs can be written as
\begin{equation}
    w_k = X^T(XX^T)^{-1}\left[I-(I-\eta XX^T)^{k}\right]Y
\end{equation}
where $X,Y$ indicate the entire training dataset, i.e. $X \in \mathbb{R}^{N \times d}$ and $Y \in \mathbb{R}^N$
\label{lemma:A1}
\end{lemma}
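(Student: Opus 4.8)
The plan is to derive the closed-form iterate $w_k$ directly from the gradient descent recursion, working in the eigenbasis of $XX^\top$ and exploiting the fact that with $w_0 = 0$ the iterates stay in the row space of $X$. First I would write the (full-batch / population-over-training-set) gradient of $\mathcal{L} = \frac{1}{N}\|Y - Xw\|^2$ as $\nabla_w \mathcal{L} = -\frac{2}{N} X^\top (Y - Xw)$, and absorb constants into the step size $\eta$, so that the update reads $w_{k+1} = w_k + \eta X^\top (Y - X w_k) = (I - \eta X^\top X) w_k + \eta X^\top Y$. This is an affine recursion, so unrolling it from $w_0 = 0$ gives the geometric-series expression $w_k = \eta \sum_{j=0}^{k-1} (I - \eta X^\top X)^j X^\top Y$.

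Next I would push the $X^\top$ through the sum. The key identity is $(I - \eta X^\top X)^j X^\top = X^\top (I - \eta X X^\top)^j$, which follows by induction on $j$ (each factor of $X^\top X$ hitting $X^\top$ on the left becomes $X^\top$ times $XX^\top$ on the right). Hence $w_k = \eta X^\top \left(\sum_{j=0}^{k-1} (I - \eta X X^\top)^j\right) Y$. In the overparameterized regime $d \ge N$ with $X$ full row rank, $XX^\top \in \mathbb{R}^{N\times N}$ is invertible, and the finite geometric series sums to $\eta \sum_{j=0}^{k-1}(I-\eta XX^\top)^j = (\eta XX^\top)^{-1}\big[I - (I - \eta XX^\top)^k\big] = (XX^\top)^{-1}\big[I - (I-\eta XX^\top)^k\big]/\eta \cdot \eta$, i.e. exactly $(XX^\top)^{-1}[I - (I - \eta XX^\top)^k]$ after cancelling the $\eta$. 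Substituting back yields
\begin{equation*}
  w_k = X^\top (XX^\top)^{-1}\left[I - (I - \eta XX^\top)^k\right] Y,
\end{equation*}
which is the claimed formula.

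The only genuine subtlety — and the step I expect to need the most care — is justifying the matrix geometric-series identity $\eta\sum_{j=0}^{k-1}(I-\eta XX^\top)^j = (XX^\top)^{-1}[I-(I-\eta XX^\top)^k]$: this requires $XX^\top$ to be invertible (the overparameterized, full-rank assumption) and is cleanest to verify by diagonalizing $XX^\top = V\mathrm{diag}(\mu_1,\dots,\mu_N)V^\top$ and checking the scalar identity $\eta\sum_{j=0}^{k-1}(1-\eta\mu)^j = (1-(1-\eta\mu)^k)/\mu$ on each eigenvalue $\mu = \mu_i > 0$. I would also note in passing that $w_k$ as written lies in $\mathrm{row}(X)$ for all $k$ and, as $k\to\infty$ with $0 < \eta < 2/\mu_{\max}$, converges to the minimum-norm interpolant $X^\top(XX^\top)^{-1}Y$ — this sanity check both confirms the formula and sets up the subsequent convergence-rate argument for \cref{theorem_us_convergence}, where the slowest-decaying mode is governed by the smallest eigenvalue $\mu_{\min}$, hence by the power-law tail.
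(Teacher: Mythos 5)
Your proposal is correct and follows essentially the same route as the paper's proof: unrolling the gradient descent recursion from $w_0=0$, reducing it to a finite geometric series in $(I-\eta XX^T)$, and collapsing the series via invertibility of $XX^T$ (the paper does this through the ansatz $w_k = \eta X^T u_k Y$ rather than your intertwining identity, but the algebra is the same). Your explicit statement of the full-row-rank assumption and the min-norm limit is a useful clarification of what the paper leaves implicit, but it does not constitute a different argument.
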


\begin{proof}
We start with the gradient of the regression loss function for a defined training set denoted by $(X,Y)$ in a vectorized notation:
\begin{align}
    F(w_k) &= |Y-Xw_k|^2 \nonumber \\ 
    \implies \nabla F(w_k) &= X^T(Xw_k-Y) \\
\end{align}
We assume that the weights are initialized at 0, i.e. $w_0=0$. Using the gradient descent update:
\begin{align}
w_{k+1} &= w_k - \eta \nabla F = w_k - \eta X^T(Xw_k -Y) = (I - \eta X^TX)w_k + \eta X^TY \nonumber\\
w_1 &= (I-\eta X^TX)w_o + \eta X^TY = \eta X^TY \nonumber\\
\text{Let  }w_k &= \eta X^Tu_kY \implies u_1 = I \nonumber\\
w_2 &= \eta X^Tu_2Y \nonumber\\
    &= (I-\eta X^T X)\eta X^TY + \eta X^TY = \eta X^T[(I-\eta XX^T) + I]Y \nonumber\\
\implies u_2 &= (I-\eta XX^T) + I \nonumber\\
u_k &= (I-\eta XX^T)u_{k-1} + I = \sum_{i=0}^{k}(I-\eta XX^T)^{i-1}\nonumber\\
    &= (I-(I-\eta XX^T))^{-1}(I-(I-\eta XX^T)) \sum_{i=0}^{k}(I-\eta XX^T)^{i-1} \nonumber\\
    &= (\eta XX^T))^{-1}(I-(I-\eta XX^T)) \sum_{i=0}^{k}(I-\eta XX^T)^{i-1} \nonumber\\
    &= \frac{1}{\eta}(XX^T)^{-1} \sum_{i=0}^{k}[(I-\eta XX^T)^{i-1} - (I-\eta XX^T)^{i}]\nonumber\\
    &=\frac{1}{\eta}(XX^T)^{-1}[I - (I-\eta XX^T)^k]\nonumber\\
\implies w_k &= \eta X^Tu_kY = X^T(XX^T)^{-1}[I-(I-\eta XX^T)^k]Y 
\end{align}
\end{proof}

This proves \cref{lemma:A1}. We will now use this lemma to prove \cref{theorem_us_convergence}. From this result, we can also write:
\begin{equation}
    \Delta w_k = \eta X^T(I-\eta XX^T)^{k}Y
    \label{eq:lemma_A1_2}
\end{equation}

We restate the theorem from the main text. Note that the notations are simplified here from the theorem statement to improve readability.

\begin{theorem}
Let $\hat{Y} = X^T w$ be a finite dimensional linear regression problem where $w$ is learned using gradient descent. If we assume power law distribution in eigenspectrum of $X$, i.e. $\lambda_n = \frac{c}{n^{\alpha}} \forall n \in \{1,2 ... N\}$, then the time required by gradient descent to minimize the training error, $T_{convergence} = \mathcal{O}(N^{\alpha})$
\label{theorem:A2}
\end{theorem}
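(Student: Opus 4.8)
The plan is to build directly on \cref{lemma:A1}, which already supplies a closed form for the iterates, and reduce the convergence question to tracking one scalar geometric sequence per principal direction. First I would take the increment formula $\Delta w_k = \eta X^T(I-\eta XX^T)^{k}Y$ from \cref{eq:lemma_A1_2} and diagonalize the $N\times N$ Gram matrix $XX^T = \sum_{n=1}^{N}\lambda_n u_n u_n^T$, with $\lambda_1\geq\dots\geq\lambda_N>0$ and orthonormal $u_n$. Expanding $Y=\sum_n\langle Y,u_n\rangle u_n$ gives $(I-\eta XX^T)^k Y=\sum_n(1-\eta\lambda_n)^k\langle Y,u_n\rangle u_n$, so the update along the $n$-th direction contracts by exactly $(1-\eta\lambda_n)$ each step. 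Provided the step size lies in the stable range $0<\eta<2/\lambda_1$ (needed anyway for gradient descent to converge), every factor satisfies $|1-\eta\lambda_n|<1$, and the slowest mode is the one attached to the smallest eigenvalue $\lambda_N$. This is exactly the place where \cref{theorem_advani_saxe_dynamics} is sharpened: the abstract $\mathcal{O}(1/\lambda_{\min})$ dependence becomes a concrete $\mathcal{O}(N^\alpha)$ once the power law is imposed.

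Next I would make ``time to convergence'' precise as the number of steps $k$ required for $w_k$ to come within a fixed tolerance $\epsilon$ of its limit $w_\infty = X^T(XX^T)^{-1}Y$. Since $\|w_\infty-w_k\| \leq \max_n (1-\eta\lambda_n)^k\,\|X^T(XX^T)^{-1}\|\,\|Y\|$ and the maximum over $n$ is attained at $n=N$, it suffices to force $(1-\eta\lambda_N)^k\leq\epsilon'$ for a rescaled tolerance $\epsilon'$. Taking logarithms and using $-\ln(1-x)\in[x,\ x/(1-x)]$ for $x\in(0,1)$ yields $k=\Theta\!\left(\frac{\ln(1/\epsilon')}{\eta\lambda_N}\right)$. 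Substituting the power-law value $\lambda_N=c/N^{\alpha}$ gives $T_{\mathrm{convergence}}=\Theta\!\left(\frac{N^{\alpha}\ln(1/\epsilon')}{\eta c}\right)=\mathcal{O}(N^{\alpha})$, since $c,\eta,\epsilon'$ are constants independent of $N$. The argument is unchanged if the power law holds only on the tail $n\geq n^{*}$, because $\lambda_N = c/N^\alpha$ is the relevant bottleneck regardless.

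The main obstacle is less any single calculation and more pinning down the right notion of convergence and its tolerance dependence: $Y$ may have a component orthogonal to the column space of $X$ (the irreducible training error in the overparameterized regime), so ``minimizing the training error'' must be read as converging to the min-norm interpolant of the realizable part, and the constants hidden in $\mathcal{O}(\cdot)$ absorb $\eta$, $c$, $\lambda_1$ and $\ln(1/\epsilon)$. A secondary point needing care is justifying that the worst-case rate over all $n$ is genuinely set by $\lambda_N$ and not killed by a cancellation in the coefficients $\langle Y,u_n\rangle$; the clean fix is either to state the bound uniformly over targets $Y$, or to observe that generic targets have nonvanishing projection onto $u_N$, so the $\lambda_N$ mode is the true rate-limiting step.
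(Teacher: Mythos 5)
Your proposal is correct and takes essentially the same route as the paper's proof: both start from the closed-form iterates of \cref{lemma:A1}, diagonalize $XX^T$ to see that each mode contracts by $(1-\eta\lambda_n)$ per step, identify $\lambda_N$ as the rate-limiting eigenvalue, and substitute the power law $\lambda_N = c N^{-\alpha}$ to get $T_{\mathrm{convergence}} = \mathcal{O}(N^{\alpha})$. Your version merely tightens the final step (an explicit $\epsilon$-tolerance and logarithm in place of the paper's linearization of $(1-\hat{\eta}\lambda_N/\lambda_1)^k$), which is a cosmetic rather than substantive difference.
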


\begin{proof}
Using result from \cref{lemma:A1}, it is clear that the gradient converges to 0 if $\lambda_1<\frac{1}{\eta}$ where $\lambda_1$ is the leading eigenvalue of $XX^T$. 

Thus, $\eta < \frac{1}{\lambda_1} $, i.e. small learning rate setting.
So, we set $\eta = \frac{\hat{\eta}}{\lambda_1}$ where $\hat{\eta}<1$. Plugging this in \cref{eq:lemma_A1_2}
\begin{equation}
    \Delta w_k = \frac{\hat{\eta}}{\lambda_1}X^T(I- \frac{\hat{\eta}}{\lambda_1}XX^T)^kY
    \label{eq:delta_wk}
\end{equation}

Let $X=U\wedge^{\frac{1}{2}} V^T$ denote the singular value decomposition (SVD), which implies $XX^T = U\wedge U^T$. Using the SVD, we get $(I- \frac{\hat{\eta}}{\lambda_1}XX^T)^k = (I-\frac{\hat{\eta}}{\lambda_1}U\wedge U^T)^k$. It is worth noting that eigenvalues and eigenvectors of $(I-\frac{\hat{\eta}}{\lambda_1}U\wedge U^T)$ are related to that of $XX^T$ as shown below:
\begin{align}
    (I-\frac{\hat{\eta}}{\lambda_1}U\wedge U^T)u_i &= u_i - \frac{\hat{\eta}}{\lambda_1}(U\wedge U^Tu_i) \nonumber \\
    &= u_i - \frac{\hat{\eta}}{\lambda_1} \lambda_i u_i \quad \quad \text{[Using $U^TU = I$]} \nonumber \\
    \implies (I-\frac{\hat{\eta}}{\lambda_1}U\wedge U^T)u_i &= (1-\frac{\hat{\eta}}{\lambda_1} \lambda_i) u_i
    \label{eq:eigen_equiv}
\end{align}
Using \cref{eq:eigen_equiv}, we can write $(I-\frac{\hat{\eta}}{\lambda_1}U\wedge U^T)$ in the eigendecomposition form as $U\Tilde{\wedge}U^T$ where $\Tilde{\lambda_i} = 1-\hat{\eta}\frac{\lambda_i}{\lambda_1}$.
Thus, $\left(I-\frac{\hat{\eta}}{\lambda_1}U\wedge U^T\right)^k = U\Tilde{\wedge}^k U^T$. Plugging this in \cref{eq:delta_wk}, we get:
\begin{equation}
    \Delta w_k = \frac{\hat{\eta}}{\lambda_1}V\wedge^{\frac{1}{2}}U^TU\Tilde{\wedge}^kU^TY = \frac{\hat{\eta}}{\lambda_1}V\wedge^{\frac{1}{2}}\Tilde{\wedge}^kS
\end{equation}
where $S=U^TY\in \mathbb{R}^N$. For the $i^{th}$ element, we get $\Delta w_k^{(i)} = \frac{\hat{\eta}}{\lambda_1}\sum_{j}v_{i,j}\sqrt{\lambda_j}\Tilde{\lambda_j}^kS_j = \frac{\hat{\eta}}{\lambda_1}\sum_{j}v_{i,j}\sqrt{\lambda_j}(1-\hat{\eta}\frac{\lambda_j}{\lambda_1})^k S_j$. Since all other factors remain constant across training, i.e. do not change with $k$, the convergence of gradient descent depends on the factors $\left(1-\hat{\eta}\frac{\lambda_j}{\lambda_1}\right)^k$. Note that we define gradient descent to converge when $\Delta w_k^{(i)} \approx 0$ $\forall$ $i$. Therefore, the limiting factor that determines rate of convergence is $(1-\hat{\eta}\frac{\lambda_j}{\lambda_1})^k$, which in turn is limited by the smallest eigenvalue factor: $\frac{\lambda_N}{\lambda_1}$. \\
Assuming $\frac{\lambda_N}{\lambda_1} \ll 1 \implies \hat{\eta}\frac{\lambda_N}{\lambda_1} \ll 1$ as $\hat{\eta}<1 \implies (1-\hat{\eta}\frac{\lambda_N}{\lambda_1})^k \approx 1-k\hat{\eta}\frac{\lambda_N}{\lambda_1}$.\\ 
Hence the convergence time, $k^{*} = \mathcal{O}(\hat{\eta}\frac{\lambda_1}{\lambda_N}) = \mathcal{O}(\frac{\lambda_1}{\lambda_N}) $\\
If $\lambda_i$ follows power law, i.e, $\lambda_i=ci^{-\alpha}$ and $\frac{\lambda_N}{\lambda_1}=N^{-\alpha}$ then $k^{*} = \mathcal{O}(N^{\alpha})$ i.e. $k^{*}$ grows exponentially with $\alpha$.
\end{proof}

\section{Experimental results}


\begin{figure}[h!]
\centering
\includegraphics[width=0.7\columnwidth]{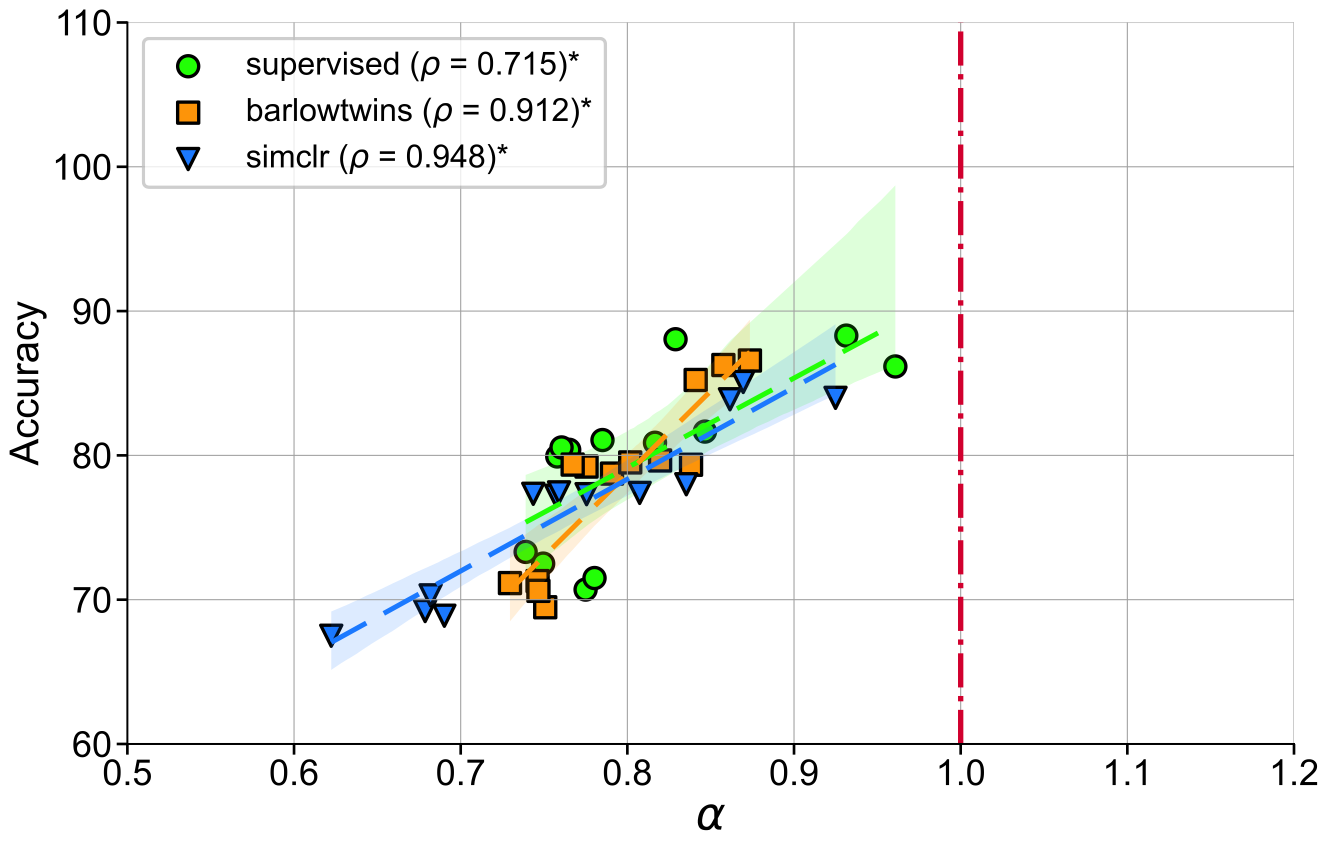}
\caption{Performance of non-linear readout on STL-10 is strongly correlated to proximity of $\alpha$ to 1 across different pretraining loss functions. The trends are similar to the performance of a linear readout, as shown in \cref{fig:acc_alpha_loss}. $^{*}p<0.05$.}
\label{fig:acc_alpha_loss_mlp}
\end{figure}

\begin{figure}[h!]
\centering
\includegraphics[width=0.8\columnwidth]{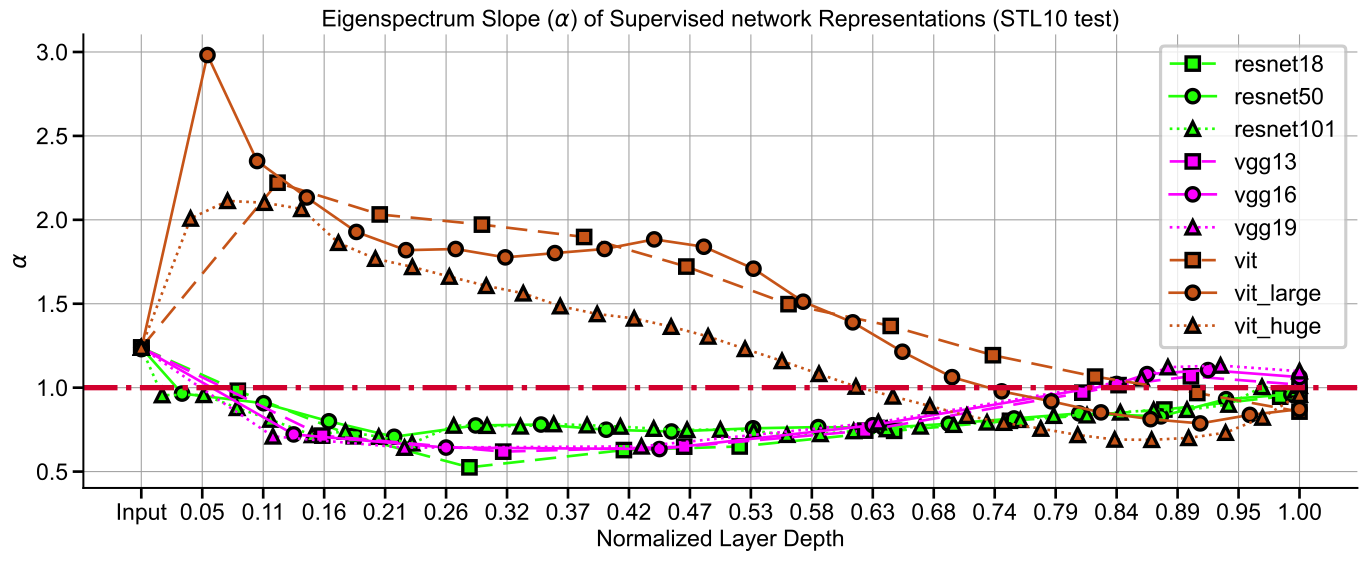}
\caption{$\alpha$ for intermediate layer representations from different backbone architectures demonstrates the contrasting representations learned by CNNs and ViT.}
\label{fig:alpha_layers_arch}
\end{figure}

\begin{figure}[h!]
\centering
\includegraphics[width=0.8\columnwidth]{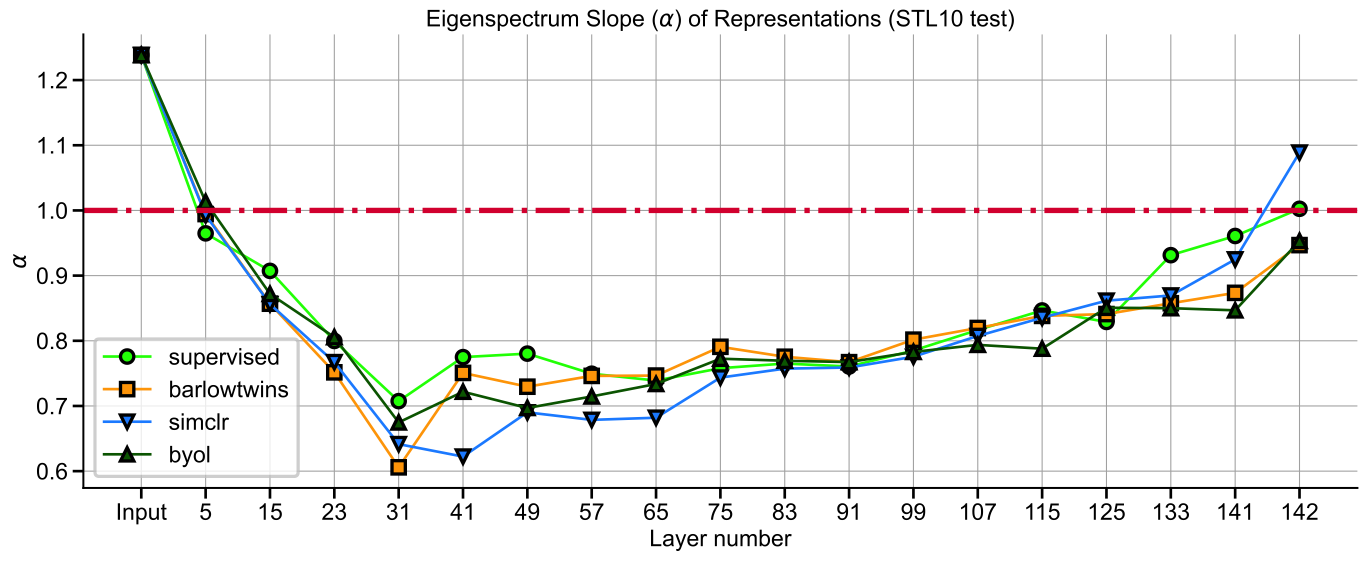}
\caption{$\alpha$ for intermediate layer representations from networks trained using different loss functions show similar trends. Representations from deeper layers exhibit $\alpha$ closer to 1 as compared to middle layer representations.}
\label{fig:alpha_layers_loss}
\end{figure}

\begin{figure}[h!]
\centering
\includegraphics[width=0.8\columnwidth]{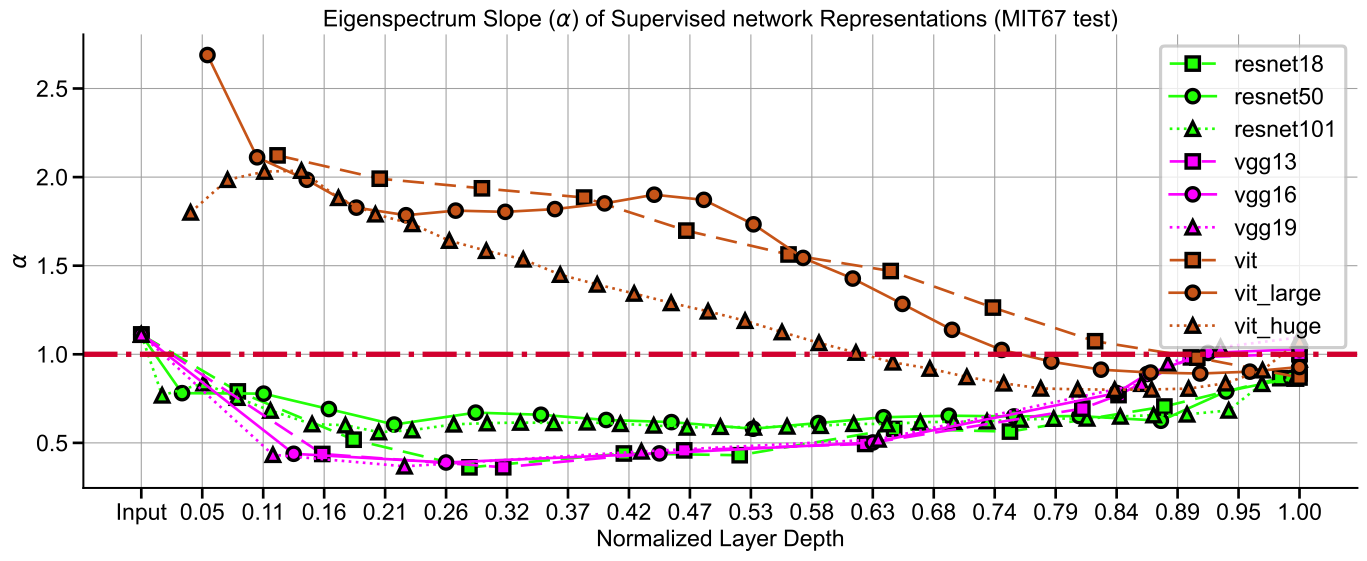}
\caption{$\alpha$ for intermediate layer representations from different backbone architectures in MIT67. Representations learned by ViT is qualitatively different from those is CNNs both in object and scene recognition datasets.}
\label{fig:alpha_layers_arch_MIT67}
\end{figure}

\begin{figure}[h!]
\centering
\includegraphics[width=0.8\columnwidth]{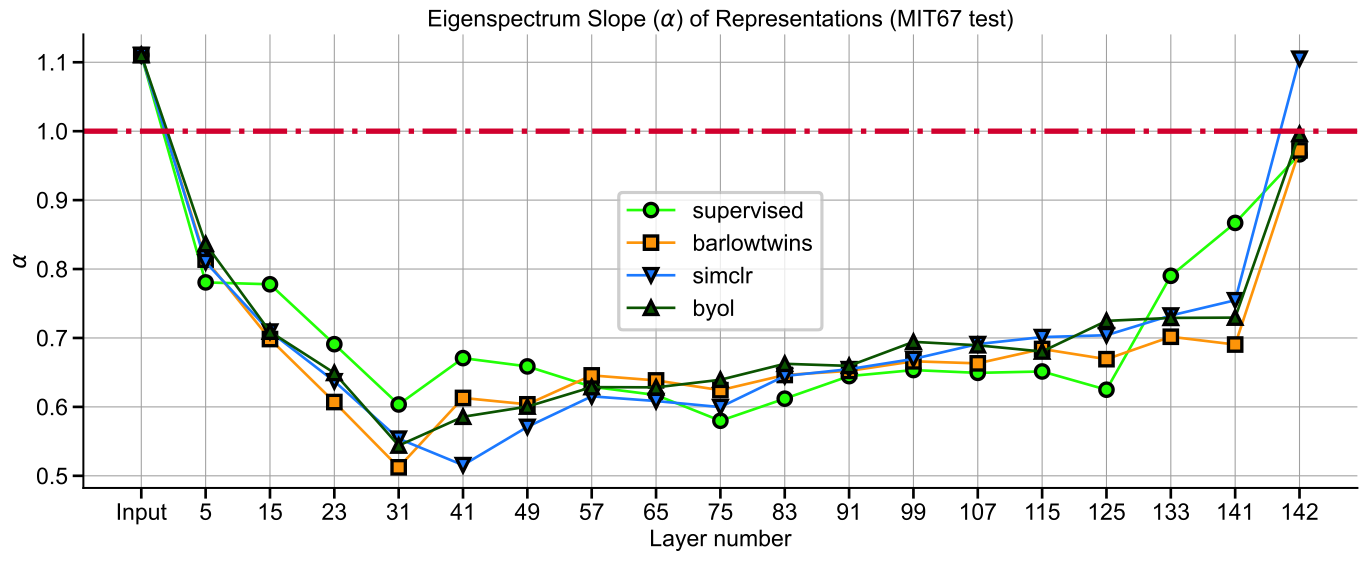}
\caption{$\alpha$ for intermediate layer representations from networks trained using different loss functions show similar trends in MIT67. Representations from deeper layers exhibit $\alpha$ closer to 1 as compared to intermediate layer representations.}
\label{fig:alpha_layers_loss_MIT67}
\end{figure}

\begin{figure}[t!]
\centering
\includegraphics[width=0.8\columnwidth]{plots/Acc_alpha_loss_finetune_75.png}
\vspace{-2em}
\caption{Representations from ResNet-50/L75 finetuned for 30 epochs, on STL-10 dataset with \texttt{label\_noise=15\%}. Note that with increasing performance, $\alpha$ approaches 1 from either side of the spectrum (\textbf{solid points are later epochs}). Consequently, SSL objectives with representations s.t $\alpha$ closer to 1 compared to supervised training, are more robust (see \cref{table:robustness_STL10}). $^{*}p<0.05$}
\label{plot:finetune_STL10_appendix}
\vspace{-1em}
\end{figure}


\end{document}